\documentclass[twoside,11pt]{article}

\usepackage[preprint,nohyperref]{arxiv}

\usepackage{graphicx}
\usepackage{wrapfig}
\usepackage{subcaption}
\usepackage{float}
\usepackage{placeins}
\usepackage{enumitem}
\usepackage[dvipsnames,table]{xcolor}
\usepackage{amssymb}
\usepackage{amsmath}
\usepackage{amsfonts}
\usepackage{mathtools}
\usepackage{bm}
\usepackage{bbm}
\usepackage{multirow}
\usepackage{booktabs}
\usepackage{longtable}
\usepackage{microtype}
\DisableLigatures[f]{encoding = *, family = *} 
\usepackage{nicefrac}
\usepackage[utf8]{inputenc}
\usepackage[T1]{fontenc}
\usepackage{natbib}
\usepackage{tikz}
\usepackage{minitoc} 

\usepackage[ruled,linesnumbered]{algorithm2e}
\usepackage{algorithmic}
\usepackage{varwidth}
\SetAlgoNlRelativeSize{0} 
\SetCommentSty{} 

\usepackage{hyperref}
\usepackage{url}
\usepackage{xurl}

\makeatletter
\def\theHALG@line{\arabic{algocf}.\arabic{ALG@line}}
\makeatother

\usepackage[capitalize,noabbrev]{cleveref}
\crefname{section}{Section}{Sections}
\crefname{subsection}{Section}{Sections}
\crefname{appendix}{Appendix}{Appendices}
\crefname{figure}{Figure}{Figures}
\crefname{table}{Table}{Tables}
\crefname{algorithm}{Algorithm}{Algorithms}
\crefname{theorem}{Theorem}{Theorems}

\crefformat{equation}{Eq.~(#2#1#3)}
\crefrangeformat{equation}{Eqs.~(#3#1#4)~to~(#5#2#6)}
\crefmultiformat{equation}{Eqs.~(#2#1#3)}{,~(#2#1#3)}{, (#2#1#3)}{,~(#2#1#3)}
\crefrangemultiformat{equation}{Eqs.~(#3#1#4)~to~(#5#2#6)}{,~(#3#1#4)~to~(#5#2#6)}{, (#3#1#4)~to~(#5#2#6)}{,~(#3#1#4)~to~(#5#2#6)}

\begin{document}

\title{\large WUSH-KV: KV Cache Quantization with Data-Adaptive Transforms}

\author{\name Jiale Chen
\email Jiale.Chen@ist.ac.at
\hfill
\addr Institute of Science and Technology Austria (ISTA)
\AND
\name Vage Egiazarian
\hfill
\addr Institute of Science and Technology Austria (ISTA)
\AND
\name Eldar Kurtić
\hfill
\addr Institute of Science and Technology Austria (ISTA) \& Red Hat AI
\AND
\name Torsten Hoefler
\hfill
\addr ETH Zürich
\AND
\name Dan Alistarh\thanks{This author was affiliated with Red Hat AI during part of this work.}
\hfill
\addr Institute of Science and Technology Austria (ISTA)
}

\maketitle

\begin{abstract}
KV cache memory and bandwidth costs grow with context length and batch size, which limits efficient long-context inference.
To address this bottleneck, we introduce WUSH-KV for low-bit KV-cache quantization.
It adapts WUSH, which constructs a data-aware transform from the second-order statistics of both factors in a matrix product to reduce quantization error.
WUSH-KV uses calibration data to construct separate key and value transforms, with the value transform folded into the model weights and the key transform applied after RoPE.
The transforms can be paired with clipped quantizers.
For one such quantizer, QuEST INT, we show that, under mild assumptions, the WUSH transform is near-optimal.
With this quantizer, WUSH-KV reduces layerwise reconstruction error and achieves the lowest end-to-end perplexity among other tested transforms.
For end-to-end evaluation, we integrate WUSH-KV into SGLang using OSCAR-style percentile-clipped affine quantization.
At 2-bit, WUSH-KV performs comparably to or outperforms the OSCAR transform across all evaluated models and downstream tasks.
\end{abstract}

\section{Introduction}
\label{sec:intro}

Large language models are increasingly used with long contexts and large serving batches.
This puts growing pressure on the memory capacity and bandwidth of inference systems.
One important source of this cost is the key-value (KV) cache.
During autoregressive generation, each attention layer stores the keys and values of previous tokens so that they do not need to be recomputed.
The cache grows linearly with sequence length and batch size, and it is repeatedly read as new tokens are generated.
As a result, the KV cache can become a major bottleneck for long-context inference.
Low-bit quantization provides a simple way to reduce its memory footprint and data movement.
Yet, aggressive quantization can introduce large errors into both attention scores and outputs.

Changes of basis (linear transforms) can make cached vectors easier to quantize, but the quality of the transform should also reflect how quantization errors propagate through attention.
Key errors affect query-key products, whereas value errors affect projected attention outputs.
This motivates separate transforms that account for both cache statistics and downstream sensitivity.
WUSH~\citep{chen2026wush} was recently introduced as a principled approach to joint weight-activation quantization.
It constructs a closed-form invertible transform from the second-order statistics of both factors in a matrix product.
This product-aware view naturally extends to the KV cache.

In this work, we adapt WUSH from weight-activation quantization to KV-cache quantization.
We call the resulting method \emph{WUSH-KV}.
For each KV head, WUSH-KV constructs separate transforms for keys and values using calibration-time statistics.
The key transform accounts for the query heads that consume the cached keys.
The value transform accounts for the output-projection blocks that consume the cached values.
The value-side transform can be folded into the model weights and therefore adds no online transform cost.
The key-side transform and its query-side compensation are applied after headwise normalization (when present) and rotary positional encoding.
The WUSH transforms operate independently of the scalar quantizer and can be paired with per-token clipped quantizers.
WUSH-KV also retains sink and recent cache entries in full precision.
Our theoretical analysis focuses on the QuEST~\citep{pmlr-v267-panferov25a} quantizer.
Under an additive rounding-noise model and explicit conditions on normalized tails and clipping errors, we show that the ideal WUSH transform is near-optimal among transforms with balanced coordinate sensitivity.
The controlled reconstruction study uses the same quantizer to isolate transform quality, while the end-to-end evaluations use the percentile-clipped affine quantizer from the OSCAR~\citep{zhou2026oscarofflinespectralcovarianceaware} concurrent work.
In our experiments, WUSH-KV consistently reduces attention reconstruction error and achieves the lowest perplexity among all existing methods.
In 2-bit downstream benchmarks, it scores higher than OSCAR on all four Qwen3-8B tasks and is competitive at 4B and 32B.

\section{Related Work}
\label{sec:related_work}

\textbf{KV cache quantization.}
There is a long line of work on this topic. 
KIVI~\citep{pmlr-v235-liu24bz} observes that keys and values have different quantization characteristics, so it quantizes keys per channel and values per token. Recent residual keys and values remain in full precision.
KVQuant~\citep{NEURIPS2024_028fcbcf} also exploits the channel-wise structure of keys.
It quantizes keys before RoPE, uses sensitivity-weighted nonuniform quantization, and handles outliers with a per-vector dense-and-sparse representation.
It keeps the first token in FP16 to protect the attention sink.
AQUA-KV~\citep{pmlr-v267-shutova25a} exploits cross-layer dependencies with compact adapters to predict cached keys and values and quantizes the remaining residual information.
These methods focus on quantizer design, outlier handling, and selective high-precision retention.

\textbf{Transform-based quantization.}
Another line of work changes the representation before quantization.
QuaRot~\citep{NEURIPS2024_b5b93943} redistributes outliers with randomized Hadamard transforms.
This enables end-to-end 4-bit quantization of weights, activations, and the KV cache.
SpinQuant~\citep{ICLR2025_e5b1c0d4} shows that quantized accuracy can vary substantially across rotations.
It learns two mergeable orthogonal rotations by minimizing a task loss on calibration data.
For low-bit activations and KV caches, it also uses efficient fixed online Hadamard transforms.
FlatQuant~\citep{pmlr-v267-sun25l} learns Kronecker-structured transforms through calibration and applies head-wise transforms to keys and values in the KV cache.
RotateKV~\citep{ijcai2025p690} specializes in rotations for aggressive 2-bit cache compression.
It combines calibration-based channel reordering with grouped-head key rotations applied before RoPE.
It also identifies additional attention-sink tokens and retains their KV entries in FP16.
TurboQuant~\citep{ICLR2026_5c802ef3} randomly rotates vectors and applies scalar quantizers matched to the resulting coordinate distribution.
For unbiased inner-product estimation, the published method additionally applies a one-bit-per-coordinate QJL sketch to the quantization residual.
Subsequent analysis~\citep{benbasat2026noteturboquantearlierdriveeden} notes that randomized Hadamard transforms can replace uniform random rotations in this family of quantizers.
Practical implementations (e.g., vLLM\footnote{\url{https://docs.vllm.ai/en/v0.29.0/api/vllm/model_executor/layers/quantization/turboquant/}}) likewise use efficient Hadamard transforms but drop the QJL sketch as harmful.
This motivates the normalized Hadamard transform as our practical TurboQuant-style fixed-transform baseline.
Concurrent with our work, OSCAR~\citep{zhou2026oscarofflinespectralcovarianceaware} estimates attention-aware covariance statistics offline, and constructs separate fixed \textit{orthogonal} transforms for keys and values.
It also calibrates per-layer clipping parameters that determine token-wise clipping thresholds.
The method is implemented in SGLang~\citep{NEURIPS2024_724be447}, a serving system compatible with paged KV caches.
OSCAR restricts its key and value transforms to be orthogonal, whereas WUSH-KV allows general invertible transforms.
This additional freedom permits anisotropic rescaling to jointly balance cache statistics and downstream sensitivity, which an orthogonal transform cannot generally realize.

\section{Background on the WUSH Transform}
\label{sec:wush}

Post-training quantization replaces a model's weights, and sometimes its activations, with low-precision codes that share a scale within each group of elements.
A handful of outliers carry far more amplitude than the rest, so they set the scale, and many of the available quantization levels go unused.
Transforms are one of the remedies: for a matrix $\bm{X}$ and an invertible transform matrix $\bm{T}$, quantize $\bm{T} \bm{X}$ rather than $\bm{X}$ itself, and undo the transform with $\bm{T}^{-1}$ afterwards.
What makes a transform good is not a property of $\bm{X}$ alone: the error only matters through the matrix product (e.g., $\bm{W}^{\top} \bm{X}$) in which $\bm{X}$ is one factor.
The WUSH transform~\citep{chen2026wush} therefore builds $\bm{T}$ in closed form from second-order statistics of both factors of that product.
According to their analysis, this choice is provably optimal for floating-point formats and asymptotically optimal for integer ones.
The transform is block diagonal, and its block width is the same as the quantization group size.
This keeps it cheap to apply to activations at inference time.
We follow the original WUSH construction and express it as a function of a Gram matrix and a Hessian, which is the form used throughout the rest of this paper.\footnote{The form we give is an effectively equivalent rewrite of the original construction, which leads to simpler notation for the KV-cache derivation.}

\textbf{The loss a transform has to control.}
Let $\bm{W} \in \mathbb{R}^{d \times d_{\mathrm{out}}}$ and $\bm{X} \in \mathbb{R}^{d \times d_{\mathrm{batch}}}$ be the weights and activations of a linear layer, restricted to the $d$ input coordinates one block acts on.
\citet{chen2026wush} quantizes both factors at once.
We will only ever quantize one of them, which is also the case to which their derivation reduces.
We take that factor to be $\bm{X}$, keep $\bm{W}$ exact, and let $\mathcal{Q}$ be a quantizer, specified in \cref{sec:method}.
The round trip through the invertible transform $\bm{T} \in \mathbb{R}^{d \times d}$ leaves behind a perturbation $\bm{\varepsilon} = \bm{T}^{-1} \mathcal{Q} \left( \bm{T} \bm{X} \right) - \bm{X}$.
With $\bm{H} = 2 \bm{W} \bm{W}^{\top}$, the loss this perturbation causes at the layer output is $\ell = \left\| \bm{W}^{\top} \bm{\varepsilon} \right\|_{\mathrm{F}}^{2} = 2^{-1} \sum_{j} \bm{\varepsilon}_{j}^{\top} \bm{H} \bm{\varepsilon}_{j}$, a quadratic form in the columns $\bm{\varepsilon}_{j}$ of $\bm{\varepsilon}$.
The matrix $\bm{H}$ is the Hessian of $\ell$ with respect to a column of $\bm{\varepsilon}$.

\textbf{The WUSH construction.}
Let $\bm{M} = \bm{X} \bm{X}^{\top}$ be the Gram matrix of $\bm{X}$, $\bm{\mathcal{H}} \in \left\{ \pm d^{-\frac{1}{2}} \right\}^{d \times d}$ a normalized (orthogonal) Hadamard matrix,\footnote{We assume throughout that a Hadamard matrix of order $d$ exists.} $\mathbf{I}$ the $d \times d$ identity, and $\gamma \ge 0$ a small damping ratio.
The transform $\bm{T}$ applied to $\bm{X}$ is built as
\begin{equation}
\begin{gathered}
\bm{L} \bm{L}^{\top} = \bm{H} + \gamma d^{-1} \operatorname{tr} \left(
\bm{H} \right) \mathbf{I}
, \qquad
\bm{U} \bm{\Lambda} \bm{U}^{\top} = \bm{L}^{\top} \left( \bm{M} + \gamma d^{-1}
\operatorname{tr} \left( \bm{M} \right) \mathbf{I} \right) \bm{L}
, \\
c = \left( 1 + \gamma \right)^{\frac{1}{2}} \left( \operatorname{tr} \left( \bm{M} \right) \right)^{\frac{1}{2}} \left(
\operatorname{tr} \left( \bm{\Lambda}^{\frac{1}{2}} \right) \right)^{-\frac{1}{2}}
, \qquad
\bm{T} = c\, \bm{\mathcal{H}} \bm{\Lambda}^{-\frac{1}{4}} \bm{U}^{\top} \bm{L}^{\top}
,
\end{gathered}
\label{eq:wush}
\end{equation}
where $\bm{L} \bm{L}^{\top}$ is the Cholesky decomposition and $\bm{U} \bm{\Lambda} \bm{U}^{\top}$ is the symmetric eigendecomposition.
The scalar $c$ is introduced so that $\left\| \bm{T} \bm{X} \right\|_{\mathrm{F}} = \left\| \bm{X} \right\|_{\mathrm{F}}$ when $\gamma = 0$, and approximately so otherwise.

\begin{definition}
\label{def:wush}
For symmetric positive semidefinite $\bm{M}, \bm{H} \in \mathbb{R}^{d \times d}$ and a damping ratio $\gamma \ge 0$ that makes both damped matrices positive definite, write $\textnormal{\textsc{Wush}} \left( \bm{M}, \bm{H}, \gamma \right)$ for the transform $\bm{T}$ that \cref{eq:wush} builds from them, in which $\bm{M}$ is the Gram matrix of the tensor being quantized and $\bm{H}$ is the Hessian of the loss with respect to that tensor's perturbation.
\end{definition}

We hold $\gamma$ fixed throughout and abbreviate $\textsc{Wush} \left( \bm{M}, \bm{H}, \gamma \right)$ to $\textsc{Wush} \left( \bm{M}, \bm{H} \right)$.
This construction is invariant to independent positive rescaling of its arguments, $\textsc{Wush} \left( \rho_{\mathrm{M}} \bm{M}, \rho_{\mathrm{H}} \bm{H} \right) = \textsc{Wush} \left( \bm{M}, \bm{H} \right)$ for every $\rho_{\mathrm{M}}, \rho_{\mathrm{H}} > 0$, so how the two matrices are normalized never has to be tracked.
What remains, for any tensor we wish to quantize, is to write down the Hessian of the loss with respect to its perturbation.

\section{WUSH-KV}
\label{sec:method}

This section specializes in WUSH within the key/value cache of grouped-query attention.
We first identify the cached tensors and formulate their transforms, then specify the quantizer and its guarantees, and finally describe cache management, storage, and computation costs.

\subsection{KV Cache in Grouped-Query Attention}

\textbf{Grouped-query attention (GQA).}
In a model of dimension $d_{\mathrm{model}}$, an attention module has $n_{\mathrm{q}}$ query heads and $n_{\mathrm{kv}}$ key/value heads of dimension $d$, so each key/value head is read by $n_{\mathrm{q}} / n_{\mathrm{kv}}$ query heads.
A query head is indexed by the pair $\left( h, g \right)$: the key/value head $h$ it reads and its position $g = 1, \dots, n_{\mathrm{q}} / n_{\mathrm{kv}}$ within that group.
Let $\bm{X} \in \mathbb{R}^{d_{\mathrm{model}} \times S}$ be the attention module input over a context of $S$ positions, one column each.
Each head projects it with $\bm{W}_{\mathrm{Q} \left( h, g \right)}, \bm{W}_{\mathrm{K} \left( h \right)}, \bm{W}_{\mathrm{V} \left( h \right)} \in \mathbb{R}^{d_{\mathrm{model}} \times d}$.
Let $\operatorname{Norm}$ denote any headwise normalization applied over the $d$ coordinates of each projected query or key.
This includes root mean square (RMS) normalization and reduces to the identity when no such normalization is used.
The query and key normalizations may have distinct parameters, which we suppress in the notation.
The rotary embedding $\operatorname{RoPE} \left( \bm{A} \right) = \left[ \bm{R}_{1} \bm{a}_{1}, \dots, \bm{R}_{S} \bm{a}_{S} \right]$ rotates each column by an orthogonal $\bm{R}_{s} \in \mathbb{R}^{d \times d}$ fixed by its position $s$.
Write $\tau$ for the softmax temperature and $\bm{\mathcal{M}} \in \left\{ 0, - \infty \right\}^{S \times S}$ for the causal mask, zero where a query is allowed to attend to a position and $- \infty$ elsewhere.
The output projection $\bm{W}_{\mathrm{O}} \in \mathbb{R}^{n_{\mathrm{q}} d \times d_{\mathrm{model}}}$ splits into one row block $\bm{W}_{\mathrm{O} \left( h, g \right)} \in \mathbb{R}^{d \times d_{\mathrm{model}}}$ per query head.
The attention module output is then
\begin{equation}
\begin{gathered}
\bm{Q}_{h,g} = \operatorname{RoPE} \left( \operatorname{Norm} \left( \bm{W}_{\mathrm{Q} \left( h, g \right)}^{\top} \bm{X} \right) \right)
, \qquad
\bm{K}_{h} = \operatorname{RoPE} \left( \operatorname{Norm} \left( \bm{W}_{\mathrm{K} \left( h \right)}^{\top} \bm{X} \right) \right)
, \\
\bm{V}_{h} = \bm{W}_{\mathrm{V} \left( h \right)}^{\top} \bm{X}
, \qquad
\bm{P}_{h,g} = \operatorname{softmax} \left( \tau^{-1} \bm{K}_{h}^{\top} \bm{Q}_{h,g} + \bm{\mathcal{M}} \right)
, \\
\bm{O}_{h,g} = \bm{V}_{h} \bm{P}_{h,g}
, \qquad
\bm{Y} = \sum_{h,g} \bm{W}_{\mathrm{O} \left( h, g \right)}^{\top} \bm{O}_{h,g}
,
\end{gathered}
\label{eq:attention}
\end{equation}
where the softmax acts on each column.

\textbf{Cached keys and values.}
Autoregressive generation grows the context one position at a time and re-evaluates \cref{eq:attention} at each new $S$.
Of the three projections $\bm{Q}_{h,g}$, $\bm{K}_{h}$, and $\bm{V}_{h}$, only the keys and values must persist because every later position attends to all earlier ones, whereas the queries are used once and discarded.
Each key/value head therefore caches $\bm{K}_{h} = \left[ \bm{k}_{h,1}, \dots, \bm{k}_{h,S} \right]$ and $\bm{V}_{h} = \left[ \bm{v}_{h,1}, \dots, \bm{v}_{h,S} \right]$ rather than recomputing them.
Each new token appends one column to both caches, and later queries reuse the previously cached columns unchanged.
Keys are stored after headwise normalization (when present) and rotary embedding, so $\bm{k}_{h,s}$ is read without further normalization or rotation.

\subsection{WUSH-KV Transforms}

\textbf{Transform placement.}
A cached key is read only through $\bm{K}_{h}^{\top} \bm{Q}_{h,g}$ and a cached value only through $\bm{W}_{\mathrm{O} \left( h, g \right)}^{\top} \bm{V}_{h}$, and neither product changes when an invertible matrix is inserted between its factors,
\begin{equation}
\bm{K}_{h}^{\top} \bm{Q}_{h,g} = \left( \bm{T}_{\mathrm{K} \left( h \right)} \bm{K}_{h} \right)^{\top} \left( \bm{T}_{\mathrm{K} \left( h \right)}^{-\top} \bm{Q}_{h,g} \right)
, \quad
\bm{W}_{\mathrm{O} \left( h, g \right)}^{\top} \bm{V}_{h} = \left( \bm{T}_{\mathrm{V} \left( h \right)}^{-\top} \bm{W}_{\mathrm{O} \left( h, g \right)} \right)^{\top} \left( \bm{T}_{\mathrm{V} \left( h \right)} \bm{V}_{h} \right)
.
\label{eq:insertion}
\end{equation}
With the quantizer of \cref{sec:quantizer}, we therefore cache $\mathcal{Q} \left( \bm{T}_{\mathrm{K} \left( h \right)} \bm{K}_{h} \right)$ and $\mathcal{Q} \left( \bm{T}_{\mathrm{V} \left( h \right)} \bm{V}_{h} \right)$ in place of the two tensors themselves.
For each key/value head, we build one $d \times d$ transform for the keys and another for the values.
The value-side transform is folded into the value and output projection weights offline before inference: we replace $\bm{W}_{\mathrm{V} \left( h \right)}$ with $\widehat{\bm{W}}_{\mathrm{V} \left( h \right)} = \bm{W}_{\mathrm{V} \left( h \right)} \bm{T}_{\mathrm{V} \left( h \right)}^{\top}$ and every $\bm{W}_{\mathrm{O} \left( h, g \right)}$ with $\widehat{\bm{W}}_{\mathrm{O} \left( h, g \right)} = \bm{T}_{\mathrm{V} \left( h \right)}^{-\top} \bm{W}_{\mathrm{O} \left( h, g \right)}$.
The key side does not disappear into the projection weights because the inserted transforms act after $\operatorname{Norm}$ and the position-dependent $\operatorname{RoPE}$, and a dense $\bm{T}_{\mathrm{K} \left( h \right)}$ cannot generally be moved across these operations.

This post-RoPE placement is deliberate.
As detailed in \cref{sec:alternative_key_placements}, fully folding the key transform and its query-side compensation into the projection weights requires each to commute with RoPE and its respective learned RMS normalization.
These joint commutation constraints leave only paired sign-flip transforms, which are trivial because they do not redistribute coordinate magnitudes.
A separate alternative is to place an unrestricted transform before RoPE rather than require it to commute with RoPE.
This retains the full $d \times d$ freedom, but its compensation depends on the cached position.
At each decoding step, this would require restoring and rotating every cached key or applying a different compensation at every cache position, adding position-dependent work across the cache and complicating the attention kernel.
We therefore retain the full $d \times d$ transform after RoPE, accepting fixed online transforms for each new key and query in exchange for a simple cache representation.

\textbf{Transform formulation.}
For the construction, we use local surrogate Hessians that retain the direct bilinear partner of each cached tensor: the queries of the group for a key and the output projection for a value.
The transforms are
\begin{equation}
\begin{gathered}
\bm{H}_{\mathrm{K} \left( h \right)} = 2 \sum_{g} \bm{Q}_{h,g} \bm{Q}_{h,g}^{\top}
, \qquad
\bm{H}_{\mathrm{V} \left( h \right)} = 2 \sum_{g} \bm{W}_{\mathrm{O} \left( h, g \right)} \bm{W}_{\mathrm{O} \left( h, g \right)}^{\top}
, \\
\bm{T}_{\mathrm{K} \left( h \right)} = \textsc{Wush} \left( \bm{K}_{h} \bm{K}_{h}^{\top}, \bm{H}_{\mathrm{K} \left( h \right)} \right)
, \qquad
\bm{T}_{\mathrm{V} \left( h \right)} = \textsc{Wush} \left( \bm{V}_{h} \bm{V}_{h}^{\top}, \bm{H}_{\mathrm{V} \left( h \right)} \right)
,
\end{gathered}
\label{eq:wushkv}
\end{equation}
with $\bm{K}_{h} \bm{K}_{h}^{\top}$ and $\bm{V}_{h} \bm{V}_{h}^{\top}$ the Gram matrices of the tensors being quantized.
\cref{alg:calibration} summarizes the offline calibration procedure for each attention module.
We accumulate statistics over all token positions in each calibration sequence, then sum them across sequences to avoid storing all calibration activations and projections at once.

\begin{algorithm}[H]
\caption{WUSH-KV calibration}
\label{alg:calibration}
\begin{algorithmic}[1]
\REQUIRE Weights $\bm{W}_{\mathrm{Q}}, \bm{W}_{\mathrm{K}}, \bm{W}_{\mathrm{V}}, \bm{W}_{\mathrm{O}}$, calibration sequences $\left\{ \bm{X}^{\left( i \right)} \right\}_{i = 1}^{n_{\mathrm{cal}}}$
\ENSURE Key transforms $\bm{T}_{\mathrm{K}}$ and folded weights $\widehat{\bm{W}}_{\mathrm{V}}, \widehat{\bm{W}}_{\mathrm{O}}$
\FOR{$i = 1, \dots, n_{\mathrm{cal}}$ independently}
\STATE $\bm{Q}^{\left( i \right)} \gets \operatorname{RoPE} \left( \operatorname{Norm} \left( \bm{W}_{\mathrm{Q}}^{\top} \bm{X}^{\left( i \right)} \right) \right); \quad \bm{K}^{\left( i \right)} \gets \operatorname{RoPE} \left( \operatorname{Norm} \left( \bm{W}_{\mathrm{K}}^{\top} \bm{X}^{\left( i \right)} \right) \right)$
\STATE $\bm{V}^{\left( i \right)} \gets \bm{W}_{\mathrm{V}}^{\top} \bm{X}^{\left( i \right)}$
\STATE $\bm{M}_{\mathrm{K} \left( h \right)}^{\left( i \right)} \gets \bm{K}_{h}^{\left( i \right)} \bm{K}_{h}^{\left( i \right) \top} \quad \forall\, h; \quad \bm{M}_{\mathrm{V} \left( h \right)}^{\left( i \right)} \gets \bm{V}_{h}^{\left( i \right)} \bm{V}_{h}^{\left( i \right) \top} \quad \forall\, h$
\STATE Compute Hessian matrices $\bm{H}_{\mathrm{K} \left( h \right)}^{\left( i \right)}, \bm{H}_{\mathrm{V} \left( h \right)}^{\left( i \right)}$ from $\bm{Q}^{\left( i \right)}, \bm{W}_{\mathrm{O}}$ and optionally $\bm{K}^{\left( i \right)}, \bm{V}^{\left( i \right)} \quad \forall\, h$
\ENDFOR
\STATE $\bm{T}_{\mathrm{K} \left( h \right)} \gets \textsc{Wush} \left( \sum_{i} \bm{M}_{\mathrm{K} \left( h \right)}^{\left( i \right)}, \sum_{i} \bm{H}_{\mathrm{K} \left( h \right)}^{\left( i \right)} \right) \quad \forall\, h$
\STATE $\bm{T}_{\mathrm{V} \left( h \right)} \gets \textsc{Wush} \left( \sum_{i} \bm{M}_{\mathrm{V} \left( h \right)}^{\left( i \right)}, \sum_{i} \bm{H}_{\mathrm{V} \left( h \right)}^{\left( i \right)} \right) \quad \forall\, h$
\STATE $\widehat{\bm{W}}_{\mathrm{V} \left( h \right)} \gets \bm{W}_{\mathrm{V} \left( h \right)} \bm{T}_{\mathrm{V} \left( h \right)}^{\top} \quad \forall\, h; \quad \widehat{\bm{W}}_{\mathrm{O} \left( h, g \right)} \gets \bm{T}_{\mathrm{V} \left( h \right)}^{-\top} \bm{W}_{\mathrm{O} \left( h, g \right)} \quad \forall\, h, g$
\end{algorithmic}
\end{algorithm}

\subsection{Quantizer and Near-Optimality}
\label{sec:quantizer}

We use per-token quantization: the $d$ channels of a key or value head form one group and share a scale.
To analyze clipping, we use the projection step of QuEST~\citep{pmlr-v267-panferov25a}.
For a nonzero group $\bm{x}$, it sets the step $\Delta$ from the group's root mean square (RMS) and reconstructs at bin centers,
\begin{equation}
\Delta = 2 \alpha_{b} \left( 2^{b} - 1 \right)^{-1} d^{-\frac{1}{2}} \left\| \bm{x} \right\|
, \qquad
\mathcal{Q} \left( \bm{x} \right) = \Delta \left( \left\lfloor \Delta^{-1} \bm{x} \right\rfloor + 2^{-1} \right)
,
\label{eq:quantizer}
\end{equation}
with the floor clamped to $\left[ - 2^{b-1}, 2^{b-1} - 1 \right]$ and $\mathcal{Q} \left( \bm{0} \right) = \bm{0}$.
The $2^{b}$ levels are evenly spaced between $\pm\alpha_{b}$ times the group RMS, and entries outside this range are clipped.
The constant $\alpha_{b}$ minimizes the mean squared error of the corresponding fixed-scale grid on a standard Gaussian.

For the analysis, write $\ell(\bm{T})$ for the quadratic output loss from \cref{sec:wush} after quantizing $\bm{T}\bm{X}$ and undoing $\bm{T}$.
We keep clipping exact and model rounding within the range by independent uniform noise, as specified in \cref{sec:noise_model}.
We call an invertible transform \emph{sensitivity-balanced} when $\bm{T}^{-\top} \bm{H} \bm{T}^{-1}$ has equal diagonal entries.

\begin{theorem}
\label{thm:near_optimal}
Let $\bm{M} = \bm{X} \bm{X}^{\top}$ and $\bm{H}$ be symmetric positive definite, and let $\bm{T}_{\star}$ be \cref{eq:wush} with $\gamma = 0$.
Under the Gaussian-tail and clipping-alignment conditions in \cref{sec:clipping}, with bounds independent of $d$ and $b$, every sensitivity-balanced invertible $\bm{T}$ satisfies
\begin{equation}
\mathbb{E} \left[ \ell \left( \bm{T}_{\star} \right) \right]
\le \left( 1 + O \left( \alpha_{b}^{-2} \right) \right)
\mathbb{E} \left[ \ell \left( \bm{T} \right) \right]
\qquad \text{as } b \to \infty
.
\label{eq:near_optimal}
\end{equation}
The expectation is over rounding noise, and the constant depends only on the assumption bounds.
\end{theorem}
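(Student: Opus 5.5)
Under the noise model of \cref{sec:noise_model}, I would split the expected loss into a rounding part and a clipping part and show that, for every sensitivity-balanced $\bm{T}$, the rounding part is bounded below by the corresponding part for $\bm{T}_{\star}$, while the clipping part is a factor $O(\alpha_b^{-2})$ smaller.

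\textbf{Rounding term.} For column $j$ write $\bm{y}_j = \bm{T}\bm{x}_j$. Within range the rounding error is uniform with variance $\Delta_j^2/12$, and $\Delta_j^2 = 4\alpha_b^2(2^b-1)^{-2} d^{-1}\|\bm{y}_j\|^2$. With $\bm{G} = \bm{T}^{-\top}\bm{H}\bm{T}^{-1}$, the rounding contribution is
\[
\tfrac{1}{2}\sum_j \tfrac{\Delta_j^2}{12}\,\operatorname{tr}(\bm{G}) \;=\; C_b\, d^{-1}\,\operatorname{tr}\bigl(\bm{T}\bm{M}\bm{T}^{\top}\bigr)\operatorname{tr}\bigl(\bm{T}^{-\top}\bm{H}\bm{T}^{-1}\bigr),
\qquad C_b = \frac{\alpha_b^2}{6\,(2^b-1)^2}.
\]
Clipped coordinates would slightly reduce this term, and they are accounted for in the clipping step below.

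The product of traces is invariant to scaling $\bm{T}$. Setting $\bm{A} = \bm{T}\bm{L}^{-\top}$, it becomes $\operatorname{tr}(\bm{A}\bm{L}^{\top}\bm{M}\bm{L}\bm{A}^{\top})\,\operatorname{tr}(\bm{A}^{-\top}\bm{A}^{-1})$. By the standard von Neumann/Cauchy--Schwarz argument this is at least $(\operatorname{tr}\bm{\Lambda}^{1/2})^2$. Equality holds exactly when $\bm{A}^{\top}\bm{A} \propto \bm{U}\bm{\Lambda}^{-1/2}\bm{U}^{\top}$, i.e.\ for $\bm{T} = \bm{O}\bm{\Lambda}^{-1/4}\bm{U}^{\top}\bm{L}^{\top}$ with $\bm{O}$ orthogonal.

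$\bm{T}_\star$ is of this form with $\bm{O} = \bm{\mathcal{H}}$. Moreover, $\bm{G}_\star = c^{-2}\bm{\mathcal{H}}\bm{\Lambda}^{1/2}\bm{\mathcal{H}}^{\top}$ has constant diagonal because $\bm{\mathcal{H}}$ has entries $\pm d^{-1/2}$. So $\bm{T}_\star$ is itself sensitivity-balanced and minimizes the rounding term over all invertible $\bm{T}$, in particular over all balanced ones.

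\textbf{Clipping term.} The sensitivity-balanced hypothesis is used here. If $\bm{G}$ has equal diagonal entries $\operatorname{tr}(\bm{G})/d$, the expected clipping loss (using the exact clipping error $\bm{e}$ together with its cross terms with rounding noise, which vanish in expectation) can be compared with $\operatorname{tr}(\bm{G})\,\|\bm{e}\|^2/d$ via the clipping-alignment condition, which controls off-diagonal contributions. The normalized-tail (Gaussian-tail) condition bounds the fraction of coordinates clipped at level $\alpha_b$ times the RMS. That bounds $\mathbb{E}\|\bm{e}_j\|^2$ relative to $\|\bm{y}_j\|^2$ by a Gaussian-tail-type quantity. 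Because $\alpha_b$ is the MSE-optimal clipping constant for the Gaussian, the clipping MSE balances the rounding MSE $\alpha_b^2/(3(2^b-1)^2)$ up to a factor $O(\alpha_b^{-2})$. This is the known asymptotic $\alpha_b^2 \sim 2b\ln 2$ tradeoff, where the tail mass times $\alpha_b^{-2}$ matches the rounding term.

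Hence for any balanced $\bm{T}$ (including $\bm{T}_\star$),
\[
\mathbb{E}[\ell(\bm{T})] = R(\bm{T})\bigl(1 + \theta_{\bm{T}}\,O(\alpha_b^{-2})\bigr),
\qquad |\theta_{\bm{T}}| \le 1,
\]
where $R(\bm{T})$ is the rounding term and the constant depends only on the assumption bounds. Combining this with $R(\bm{T}_\star) \le R(\bm{T})$ gives $\mathbb{E}[\ell(\bm{T}_\star)] \le \frac{1+O(\alpha_b^{-2})}{1-O(\alpha_b^{-2})}\,\mathbb{E}[\ell(\bm{T})]$, which is \cref{eq:near_optimal}.

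\textbf{Main obstacle.} The hardest step is the clipping comparison: showing that clipping error is only a relative $O(\alpha_b^{-2})$ perturbation uniformly over balanced $\bm{T}$, independent of $d$. I would first bound the per-coordinate clipped energy with the Gaussian-tail condition, $\mathbb{E}[(|y|-\alpha\,\mathrm{rms})_+^2] \lesssim \alpha^{-3}\phi(\alpha)\,\mathrm{rms}^2$. I would then show $\phi(\alpha_b)/\alpha_b \asymp$ rounding variance using the defining optimality equation of $\alpha_b$, and use alignment to prevent clipped coordinates from concentrating on high-sensitivity directions of $\bm{G}$.
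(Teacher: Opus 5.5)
You have the unclipped term right: your substitution $\bm{A}=\bm{T}\bm{L}^{-\top}$ is an equivalent route to the paper's Cauchy--Schwarz bound, and it even characterizes all minimizers. Your target for $\bm{T}_\star$ is also the right one, namely clipping energy $O((2^b-1)^{-2})$ per unit variance against rounding variance $\alpha_b^2/(3(2^b-1)^2)$.

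The gap is on the competitor side. You claim $\mathbb{E}[\ell(\bm{T})]=R(\bm{T})\bigl(1+\theta_{\bm{T}}O(\alpha_b^{-2})\bigr)$ for \emph{every} balanced $\bm{T}$, and you justify it with the Gaussian-tail and clipping-alignment conditions. In the theorem, however, those conditions are hypotheses on the WUSH-transformed groups only: $r_j$, $\bm{e}_j$ and $\bm{B}_\star$ are all evaluated at $\bm{T}_\star$. A competitor is assumed only to be sensitivity-balanced.
\begin{itemize}
\item For an arbitrary balanced $\bm{T}$, your upper half is false. Take $\bm{H}=\mathbf{I}$, $\bm{T}=\mathbf{I}$ and spiky columns: the clipping loss then exceeds $R(\bm{T})$ by a factor of order $(2^b-1)^2\alpha_b^{-2}$.
\item The lower half is the only half you need, and it is left without justification.
\end{itemize}
Assuming the tail condition for all competitors would prove a weaker theorem.

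The missing step is an assumption-free lower bound.
\begin{itemize}
\item For each nonzero group, $\|\bm{T}\bm{x}_j\|^2=d\,r_j^2$. By Markov-type counting, fewer than $d\alpha_b^{-2}$ coordinates can exceed $\alpha_b r_j$, and every other coordinate keeps its rounding noise.
\item Because $\bm{T}$ is balanced, each retained coordinate contributes the same amount $\Delta_j^2\operatorname{tr}(\bm{B}_{\bm{T}})/(24d)$. The retained rounding is therefore at least $(1-\alpha_b^{-2})\ell_{\mathrm{rnd}}(\bm{T})$, where $\ell_{\mathrm{rnd}}$ is your $R$.
\item The clipping term $\tfrac12\sum_j\bm{e}_j^{\top}\bm{B}_{\bm{T}}\bm{e}_j$ is nonnegative because $\bm{B}_{\bm{T}}\succeq 0$.
\end{itemize}
Hence $\mathbb{E}[\ell(\bm{T})]\ge(1-\alpha_b^{-2})\ell_{\mathrm{rnd}}(\bm{T})$.

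This is where balance is genuinely needed. Without it, the clipped coordinates could be exactly the most sensitive ones. The alignment comparison, where you placed it, does not need it: for $\bm{T}_\star$ the alignment condition alone bounds $\sum_j\bm{e}_j^{\top}\bm{B}_\star\bm{e}_j$. This counting step is also where the $\alpha_b^{-2}$ rate enters on the competitor side. Combine it with $\mathbb{E}[\ell(\bm{T}_\star)]\le(1+3C_{\mathrm{tail}}C_{\mathrm{align}}\alpha_b^{-2})\ell_{\mathrm{rnd}}(\bm{T}_\star)$ and $\ell_{\mathrm{rnd}}(\bm{T}_\star)\le\ell_{\mathrm{rnd}}(\bm{T})$. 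This gives the explicit factor $1+(1+3C_{\mathrm{tail}}C_{\mathrm{align}})(\alpha_b^2-1)^{-1}$.

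A secondary point concerns your Gaussian-clipping step. Deriving $\phi(\alpha_b)/\alpha_b^3\lesssim(2^b-1)^{-2}$ from ``the optimality equation'' is only heuristic as written. The stationarity condition $\mathbb{E}[\widehat{\zeta}(\widehat{\zeta}-\zeta)]=0$ involves every inner bin. The paper controls those bins exactly by a per-bin integration by parts using $\phi'(x)=-x\phi(x)$, which yields $\mathbb{E}[(|\zeta|-\alpha_b)_+^2]\le(2^b-1)^{-2}$ for every $b$.
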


The undamped WUSH transform is sensitivity-balanced by \cref{eq:equalization}.
\cref{sec:optimality} proves the result and gives an explicit finite-bit bound for $\alpha_{b} > 1$.
The guarantee concerns the ideal transform, while the experiments use damping.

\subsection{Full-Precision Cache Window}
\label{sec:cache_window}

We combine the quantized cache with two full-precision windows and quantize newly eligible entries in batches.
The first $S_{\mathrm{sink}}$ consecutive cache positions form a fixed full-precision sink window throughout generation.
Among the remaining positions, the newest $S_{\mathrm{keep}}$ form a rolling recent window.
Each new chunk is first appended in full precision and is used to compute its attention output.
We then quantize the largest multiple of $S_{\mathrm{flush}}$ among the entries that lie outside both windows under the current cache length.
Thus, $S_{\mathrm{sink}} + S_{\mathrm{keep}}$ entries per head remain persistently in full precision once the two windows no longer overlap, and fewer than $S_{\mathrm{flush}}$ additional entries can remain temporarily in full precision.
Write $S_{\mathrm{quant}}$ for the inclusive right boundary of the quantized middle region, initialized to zero for an empty cache.
\cref{alg:cache_management} gives this update for one attention module (some symbols are overloaded for transformed-coordinate counterparts).

\begin{algorithm}[H]
\caption{WUSH-KV inference}
\label{alg:cache_management}
\begin{algorithmic}[1]
\REQUIRE Weights $\bm{W}_{\mathrm{Q}}, \bm{W}_{\mathrm{K}}, \widehat{\bm{W}}_{\mathrm{V}}, \widehat{\bm{W}}_{\mathrm{O}}$, transforms $\bm{T}_{\mathrm{K}}$, cache $\bm{K}, \bm{V}$ in transformed coordinates with length $S$ and quantized boundary $S_{\mathrm{quant}}$, new activation chunk $\bm{X}'$ of length $S'$, window sizes $S_{\mathrm{sink}}, S_{\mathrm{keep}}$ with flush size $S_{\mathrm{flush}}$
\ENSURE Attention output $\bm{Y}'$, updated cache $\bm{K}, \bm{V}$ of length $S$ and quantized boundary $S_{\mathrm{quant}}$
\STATE $\bm{Q}' \gets \operatorname{RoPE} \left( \operatorname{Norm} \left( \bm{W}_{\mathrm{Q}}^{\top} \bm{X}' \right) \right); \quad \bm{K}' \gets \operatorname{RoPE} \left( \operatorname{Norm} \left( \bm{W}_{\mathrm{K}}^{\top} \bm{X}' \right) \right)$
\STATE $\bm{V}' \gets \widehat{\bm{W}}_{\mathrm{V}}^{\top} \bm{X}'$
\STATE $\bm{Q}_{h,g}' \gets \bm{T}_{\mathrm{K} \left( h \right)}^{-\top} \bm{Q}_{h,g}' \quad \forall\, h, g; \quad \bm{K}_{h}' \gets \bm{T}_{\mathrm{K} \left( h \right)} \bm{K}_{h}' \quad \forall\, h$
\STATE Append $\bm{K}', \bm{V}'$ to $\bm{K}, \bm{V}$ in full precision; \quad $S \gets S + S'$
\STATE Compute $\bm{O}'$ from $\bm{Q}', \bm{K}, \bm{V}$ using grouped-query attention (mixed precision)
\STATE $\bm{Y}' \gets \widehat{\bm{W}}_{\mathrm{O}}^{\top} \bm{O}'$
\STATE $S_{\mathrm{start}} \gets \max \left( S_{\mathrm{quant}}, S_{\mathrm{sink}} \right)$
\STATE $S_{\mathrm{quant}} \gets \max \left( S_{\mathrm{quant}}, S_{\mathrm{start}} + \left\lfloor \left( S - S_{\mathrm{keep}} - S_{\mathrm{start}} \right) / S_{\mathrm{flush}} \right\rfloor S_{\mathrm{flush}} \right)$
\IF{$S_{\mathrm{quant}} > S_{\mathrm{start}}$}
\STATE $\bm{k}_{h,s} \gets \mathcal{Q} \left( \bm{k}_{h,s} \right) \quad \forall\, h,\ S_{\mathrm{start}} < s \le S_{\mathrm{quant}}$
\STATE $\bm{v}_{h,s} \gets \mathcal{Q} \left( \bm{v}_{h,s} \right) \quad \forall\, h,\ S_{\mathrm{start}} < s \le S_{\mathrm{quant}}$
\ENDIF
\end{algorithmic}
\end{algorithm}

\subsection{Storage and Computation Costs}

For a model with $n_{\mathrm{layer}}$ attention modules, at sequence length $S$, the model caches $2 n_{\mathrm{layer}} n_{\mathrm{kv}} d S$ key/value elements.
Because the full-precision windows typically cover less than 1\% of the maximum sequence length in the downstream tasks, they increase the effective cache quantization bitwidth only slightly.
The method stores $n_{\mathrm{layer}} n_{\mathrm{kv}}$ key transforms of size $d \times d$, fixed once at calibration time and shared by every position.
Their storage overhead is small relative to both the weights and the KV cache at practical sequence lengths.
The value-side transform adds no online cost because both halves are folded into the weights.
On the key side, each new key costs one $d \times d$ product, and so does each query, or $d^{2}$ multiply-accumulates per position for each of the $n_{\mathrm{kv}}$ key heads and the $n_{\mathrm{q}}$ query heads.

\section{Experiments}
\label{sec:experiments}

We evaluate WUSH-KV at three levels: controlled attention-stage reconstruction error, WikiText-2~\citep{merity2017pointer} perplexity, and downstream reasoning accuracy.

\subsection{Attention-Stage Quantization Error}
\label{sec:experiments_attention_error}

End-to-end metrics do not isolate the contribution of the KV transforms.
We therefore conduct a controlled ablation of the transform choices, measuring attention-stage reconstruction error with keys and values quantized separately and jointly.

\textbf{Setup.}
We study all 36 attention modules of Qwen3-8B.
We calibrate on 128 FineWeb-Edu~\citep{NEURIPS2024_370df50c} sequences and evaluate on 32 disjoint sequences, each of length 1024.
We evaluate reconstruction at 2, 3, and 4 bits in a prefill-like setting.
Every transform uses the QuEST quantizer in \cref{eq:quantizer}, which holds the quantization rule fixed and isolates the effect of the transform.
The entire KV sequence is quantized without the full-precision cache windows in \cref{sec:cache_window}, which isolates the effect of the transforms on reconstruction error.
Reference activations follow the clean BF16 model trajectory.

\textbf{Transforms.}
We compare six key/value transform pairs: identity (I), random orthogonal (R), normalized Hadamard (H), OSCAR~\citep{zhou2026oscarofflinespectralcovarianceaware}, WUSH-KV (WUSH), and an attention-aware WUSH-KV variant (WUSH-A).
This H choice reflects the practical TurboQuant-style implementations.
H does not reproduce the complete TurboQuant codec because every method in this experiment uses the same QuEST quantizer, isolating the effect of the transform.
WUSH, our proposed method, learns separate $\bm{T}_{\mathrm{K}}$ and $\bm{T}_{\mathrm{V}}$ from the Gram matrices and simple Hessians in \cref{eq:wushkv}.
We use $\gamma = 10^{-2}$ for WUSH.
WUSH-A uses the same construction and damping but replaces the simple Hessians with the attention-aware Hessians derived in \cref{sec:exact_gqa_hessians}.
The attention-aware Hessians require only forward-pass quantities, so WUSH-A needs neither backpropagation nor Hessian estimation through automatic differentiation.
An empirical-Fisher alternative could use language-model-loss gradients, but we do not evaluate it because backward-pass calibration is too expensive.

\begin{figure}[!htb]
\begin{center}
\includegraphics[width=0.7\linewidth]{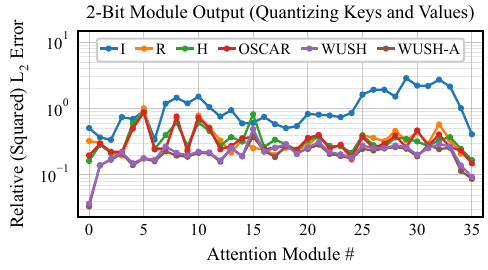}
\caption{Relative (squared) $\mathrm{L}_{2}$ module-output errors for I, R, H, OSCAR, WUSH, and WUSH-A with both keys and values quantized at 2-bit.}
\label{fig:attention_stage_int2}
\end{center}
\end{figure}

\textbf{Results.}
For each module and measured quantity $\bm{A}$, we report $\lVert \widehat{\bm{A}} - \bm{A} \rVert_{\mathrm{F}}^{2} / \lVert \bm{A} \rVert_{\mathrm{F}}^{2}$, summing the numerator and denominator separately over evaluation sequences.
\cref{fig:attention_stage_int2} measures module-output error after the output projection and before the residual connection, with both keys and values quantized.
The largest layerwise reductions occur in the first few attention modules.
At 2-bit, when both keys and values are quantized, the geometric-mean module-output errors are $0.208$ for WUSH and $0.195$ for WUSH-A, compared with $0.325$ for H and $0.309$ for OSCAR.
In \cref{sec:additional_attention_stage_errors}, the additional 2-bit plots in \cref{fig:attention_stage_diagnostics} show that WUSH reduces error in the query-key dot products, softmax probabilities, and module output when only keys are quantized, as well as module-output error when only values are quantized.
The 3-bit and 4-bit plots appear in \cref{fig:attention_stage_combined}.
Across bitwidths, WUSH and WUSH-A achieve lower module-output error than H and OSCAR for most attention modules in this setting and provide the strongest overall results among the evaluated transforms.
WUSH-A offers only a modest improvement over WUSH here.
We therefore use the simpler WUSH Hessians for the main method.
WUSH-A remains a practical alternative for future models or settings in which attention-aware sensitivity provides a larger gain.

\subsection{Perplexity Evaluations}
\label{sec:experiments_perplexity}

We first test whether the lower attention-stage reconstruction error of WUSH translates into better language-model quality by measuring Qwen3-8B perplexity on WikiText-2.

\begin{wraptable}{r}{0.35\textwidth}
\caption{Qwen3-8B WikiText-2 perplexity.
The full-precision baseline is $9.72$.}
\label{tab:wikitext_perplexity}
\begin{center}
\begin{tabular}{@{}c|ccc@{}}
\toprule
Transform & 4-Bit & 3-Bit & 2-Bit \\
\midrule
I & 11.36 & 15.29 & 38.08 \\
R & 9.77 & 10.18 & 14.22 \\
H & 9.77 & 10.06 & 258.29 \\
OSCAR & 9.77 & 10.19 & 13.74 \\
WUSH & \textbf{9.73} & \textbf{9.82} & \textbf{10.51} \\
\bottomrule
\end{tabular}
\end{center}
\end{wraptable}

\textbf{Setup.}
For WUSH and OSCAR calibration, we use 128 FineWeb-Edu sequences of length 32,768.
We use YaRN with factor $4$ for calibrations and evaluations on Qwen3-8B.
For WUSH, following \cref{alg:calibration}, an unquantized forward pass accumulates key Gram matrices after RMS normalization and RoPE, value Gram matrices, and simple per-head Hessians defined in \cref{eq:wushkv}.
We then construct one key and one value transform for every attention module and KV head with $\gamma = 10^{-2}$.
Calibration is performed once, and the resulting transforms are reused across all cache bitwidths.
On a single NVIDIA L40S GPU, this one-time Qwen3-8B calibration takes about 12 minutes and peaks at 39 GiB of GPU memory.
All quantized runs use the QuEST quantizer in \cref{eq:quantizer}, matching our theoretical analysis.
We compare identity (I), random orthogonal (R), normalized Hadamard (H), OSCAR, and WUSH-KV (WUSH) at 4, 3, and 2 bits.
For evaluation, we concatenate and tokenize the WikiText-2 test split, then divide it into non-overlapping sequences of length 2,048.
Each sequence starts with an empty cache and is processed in 16-token forward chunks.
Perplexity is computed from the average next-token negative log-likelihood.
For the quantized runs, we use the full-precision cache windows in \cref{sec:cache_window} with $S_{\mathrm{sink}} = 16$, $S_{\mathrm{keep}} = 128$, and $S_{\mathrm{flush}} = 16$.

\textbf{Results.}
\cref{tab:wikitext_perplexity} shows that WUSH achieves the lowest perplexity among the quantized transforms at every bitwidth.
OSCAR remains competitive at higher bitwidths, but WUSH's advantage grows under more aggressive quantization: at 2-bit, WUSH obtains $10.51$ perplexity compared with OSCAR's $13.74$. 
WUSH otherwise remains close to the full-precision baseline, while the other transforms degrade more sharply as the bitwidth decreases.  
The extreme 2-bit perplexity for H arises from spikes in the first attention module's key vectors that are concentrated in essentially one coordinate before the transform.
The Hadamard transform spreads these spikes into nearly equal-magnitude coefficients close to the midpoints between adjacent 2-bit reconstruction levels, causing dense quantization error.

\subsection{Downstream Benchmarks}
\label{sec:experiments_benchmarks}

We next evaluate whether WUSH-KV's relative advantage carries over to downstream reasoning accuracy.
For these end-to-end evaluations, we integrate WUSH-KV into SGLang with the OSCAR quantizer~\citep{zhou2026oscarofflinespectralcovarianceaware}.
This quantizer combines percentile clipping with asymmetric affine scaling and therefore does not readily admit the exact same proof as QuEST.
Its adaptive zero point can use the available quantization levels more efficiently for shifted or asymmetric groups, while percentile clipping suppresses rare large errors.
In our experiments, this tradeoff yields better end-to-end quality than symmetric QuEST, despite slightly higher average local squared error.
Using the same quantizer also enables a direct comparison with the concurrent OSCAR method.

\textbf{Setup.}
We evaluate Qwen3-4B-Thinking-2507, Qwen3-8B, and Qwen3-32B on AIME 2025~\citep{aime25}, MATH-500~\citep{ICLR2024_aca97732}, GPQA Diamond~\citep{rein2024gpqa}, and LiveCodeBench v6~\citep{jain2025livecodebench} with stochastic sampling over three seeds.
We calibrate WUSH separately for each model on FineWeb-Edu, as in the perplexity evaluation, and reuse its transforms across downstream tasks.
We use YaRN with factor $4$ for calibrations and evaluations on Qwen3-8B and Qwen3-32B, and keep it disabled on Qwen3-4B-Thinking-2507.
For the OSCAR baseline, we use its \emph{released} GPQA Diamond calibrated transforms but were unable to exactly reproduce all scores reported in the concurrent preprint~\citep{zhou2026oscarofflinespectralcovarianceaware}, so we report our runs under the same SGLang evaluation pipeline.
For a transformed token-head group $\bm{x} \in \mathbb{R}^{d}$ at bitwidth $b$, the OSCAR-style quantizer clips the coordinates at their $\kappa$-quantile, derives an affine scale $\Delta$ and zero point $z$ from the clipped range, rounds, and reconstructs,
\begin{equation}
\begin{gathered}
x_{\min} = \max \left\{ \min x_{i}, -\operatorname{quantile}_{\kappa} \left( \left| x_{i} \right| \right) \right\},
\quad
x_{\max} = \min \left\{ \max x_{i}, \operatorname{quantile}_{\kappa} \left( \left| x_{i} \right| \right) \right\},
\\
\Delta = \left( 2^{b} - 1 \right)^{-1} \left( x_{\max} - x_{\min} \right),
\quad
z = -\Delta^{-1} x_{\min},
\quad
\mathcal{Q} \left( \bm{x} \right) = \Delta \left( \left\lfloor \Delta^{-1} \bm{x} + z \bm{1} \right\rceil - z \bm{1} \right).
\end{gathered}
\label{eq:oscar_quantizer}
\end{equation}
with $\operatorname{quantile}_{\kappa} \left( \left| x_i \right| \right)$ taken over $i = 1, \ldots, d$ and $\left\lfloor \cdot \right\rceil$ denoting round-to-nearest clamped to $\left[ 0, 2^{b} - 1 \right]$.
We use $\kappa_{\mathrm{K}} = 0.96$ for all three models and $\kappa_{\mathrm{V}} = 0.92$ for Qwen3-4B-Thinking-2507 and Qwen3-8B, increasing $\kappa_{\mathrm{V}}$ to $0.96$ for Qwen3-32B.
We reuse OSCAR's tuned clipping ratios for WUSH-KV without retuning them for the WUSH-transformed K/V distributions.
These ratios may therefore be suboptimal for WUSH-KV, leaving potential headroom from WUSH-specific clipping optimization.
We provide a quantizer ablation and calibration controls in \cref{sec:quantizer_calibration_choices}.
All quantized benchmark runs use $S_{\mathrm{sink}} = 64$, $S_{\mathrm{keep}} = 256$, and $S_{\mathrm{flush}} = 8$, matching OSCAR's BF16 sink and recent windows with page-sized INT2 flushing.
The OSCAR SGLang integration updates cache storage before attention, while still using the current token or prefill chunk's K/V in BF16.

\begin{table}[!htb]
\caption{Downstream results with 2-bit KV caches.
Each cell gives the task score (\%) with its standard deviation over three seeds.
The gray second line gives the mean generated-token count in thousands (k) and the budget-exhaustion rate.
Bold marks the best quantized mean score and the shortest mean output for each model and task.}
\label{tab:downstream_benchmarks}
\renewcommand{\arraystretch}{2.0}
\newcommand{\benchmarkcell}[4]{\shortstack[c]{\makebox[0.86in][l]{#1\,$\pm$\,#2}\\[-1pt]\makebox[0.86in][r]{\textcolor{black!65}{#3\enspace#4\%}}}}
\newcommand{\hadamardlabel}{\shortstack[c]{TurboQuant-Style\\\textit{H Transform}}}
\newcommand{\wushkvlabel}{\shortstack[c]{WUSH-KV\\\textit{WUSH Transform}}}
\begin{center}
\begin{tabular}{@{}c|c|cccc@{}}
\toprule
\multirow{1}{*}{Model} & \multirow{1}{*}{Method} & \multirow{1}{*}{AIME 2025} & \multirow{1}{*}{MATH-500} & \multirow{1}{*}{GPQA Diamond} & \multirow{1}{*}{LiveCodeBench v6} \\
\midrule
\multirow{4}{*}{\rotatebox[origin=c]{90}{\shortstack[c]{Qwen3-4B-\\Thinking-2507}}}
& \multirow{1}{*}{BF16} & \benchmarkcell{77.8}{5.1}{21.8k}{0} & \benchmarkcell{89.5}{1.5}{5.8k}{0} & \benchmarkcell{69.9}{2.0}{9.0k}{0} & \benchmarkcell{57.1}{0.6}{19.2k}{0} \\
& \multirow{1}{*}{\hadamardlabel} & \benchmarkcell{0.0}{0.0}{91.8k}{100} & \benchmarkcell{18.7}{1.0}{52.9k}{76} & \benchmarkcell{16.2}{1.8}{56.4k}{68} & \benchmarkcell{0.0}{0.0}{91.7k}{99} \\
& \multirow{1}{*}{OSCAR} & \benchmarkcell{65.6}{1.9}{\textbf{24.2k}}{0} & \benchmarkcell{89.3}{0.5}{6.6k}{0} & \benchmarkcell{65.3}{2.8}{10.1k}{1} & \benchmarkcell{50.3}{4.5}{25.0k}{2} \\
& \multirow{1}{*}{\wushkvlabel}
& \benchmarkcell{\textbf{71.1}}{6.9}{24.3k}{0}
& \benchmarkcell{\textbf{90.1}}{0.5}{\textbf{6.6k}}{0}
& \benchmarkcell{\textbf{66.3}}{2.8}{\textbf{10.0k}}{0}
& \benchmarkcell{\textbf{52.4}}{0.3}{\textbf{23.0k}}{0} \\
\midrule
\multirow{4}{*}{\rotatebox[origin=c]{90}{Qwen3-8B}}
& \multirow{1}{*}{BF16} & \benchmarkcell{67.8}{1.9}{45.4k}{11} & \benchmarkcell{95.1}{0.3}{10.8k}{1} & \benchmarkcell{57.4}{3.8}{16.5k}{1} & \benchmarkcell{47.8}{2.2}{45.7k}{11} \\
& \multirow{1}{*}{\hadamardlabel} & \benchmarkcell{0.0}{0.0}{92.1k}{98} & \benchmarkcell{21.9}{0.1}{52.3k}{75} & \benchmarkcell{30.1}{2.8}{28.6k}{29} & \benchmarkcell{1.0}{0.3}{87.8k}{89} \\
& \multirow{1}{*}{OSCAR} & \benchmarkcell{34.4}{8.4}{63.0k}{42} & \benchmarkcell{89.3}{1.1}{16.0k}{6} & \benchmarkcell{52.9}{3.0}{18.3k}{1} & \benchmarkcell{20.4}{0.3}{73.8k}{71} \\
& \multirow{1}{*}{\wushkvlabel} & \benchmarkcell{\textbf{46.7}}{3.3}{\textbf{53.4k}}{12} & \benchmarkcell{\textbf{92.9}}{0.1}{\textbf{12.2k}}{2} & \benchmarkcell{\textbf{54.5}}{2.3}{\textbf{16.3k}}{1} & \benchmarkcell{\textbf{35.2}}{0.7}{\textbf{53.9k}}{26} \\
\midrule
\multirow{4}{*}{\rotatebox[origin=c]{90}{Qwen3-32B}}
& \multirow{1}{*}{BF16} & \benchmarkcell{74.4}{5.1}{37.1k}{6} & \benchmarkcell{94.7}{1.2}{7.1k}{1} & \benchmarkcell{66.7}{2.7}{10.5k}{0} & \benchmarkcell{61.9}{0.3}{40.9k}{3} \\
& \multirow{1}{*}{\hadamardlabel} & \benchmarkcell{1.1}{1.9}{89.2k}{91} & \benchmarkcell{0.0}{0.0}{65.5k}{100} & \benchmarkcell{0.0}{0.0}{65.5k}{100} & \benchmarkcell{1.0}{1.2}{80.2k}{73} \\
& \multirow{1}{*}{OSCAR} & \benchmarkcell{\textbf{63.3}}{3.3}{\textbf{47.2k}}{9} & \benchmarkcell{\textbf{93.1}}{0.7}{\textbf{9.7k}}{2} & \benchmarkcell{\textbf{60.6}}{3.0}{13.6k}{2} & \benchmarkcell{27.8}{1.4}{71.4k}{53} \\
& \multirow{1}{*}{\wushkvlabel} & \benchmarkcell{60.0}{5.8}{47.7k}{9} & \benchmarkcell{91.9}{1.4}{10.6k}{2} & \benchmarkcell{59.3}{2.0}{\textbf{12.1k}}{4} & \benchmarkcell{\textbf{43.1}}{2.3}{\textbf{54.3k}}{9} \\
\bottomrule
\end{tabular}
\end{center}
\end{table}

\textbf{Results.}
\cref{tab:downstream_benchmarks} reports accuracy and generation behavior across model sizes.
The budget-exhaustion rate is the fraction of samples that reach the task-specific generation limit without a valid answer.
The limit is 65,536 tokens for MATH-500 and GPQA Diamond and 92,160 tokens for AIME 2025 and LiveCodeBench v6.
For a strong direct comparison, we use OSCAR's selected 2-bit quantizer and cache policy without tuning them specifically for WUSH-KV.
Even in this OSCAR-favored setup, WUSH-KV and OSCAR are close when OSCAR maintains strong accuracy, including across the 4B tasks and most 32B tasks.
At 8B, WUSH-KV scores higher than OSCAR on all four benchmarks.
When OSCAR degrades sharply, WUSH-KV can be substantially better, as on LiveCodeBench v6 for both 8B and 32B.
Relative to BF16, WUSH-KV's degradation is comparatively consistent across model sizes, whereas OSCAR's varies more by model and task.
WUSH-KV generates at most about $1.5\times$ as many tokens as BF16 in these tasks, well below the nominal $8\times$ reduction in storage per quantized KV entry from BF16 to INT2.
Thus, the longer outputs do not erase the expected cache-memory benefit.
Hadamard's large score losses and frequent budget exhaustion show that this fixed transform alone is insufficient for reliable 2-bit KV-cache quantization across these tasks.

We additionally evaluate long-context RULER NIAH~\citep{hsieh2024ruler} and MRCR~\citep{vodrahalli2024michelangelolongcontextevaluations}, finding that WUSH-KV retains higher accuracy than OSCAR at the longest tested lengths.
The detailed results are presented in \cref{sec:long_context_evaluations}.

\section{Conclusion}
\label{sec:conclusion}

We presented WUSH-KV for low-bit KV-cache quantization in grouped-query attention.
The method uses calibration data to construct WUSH transforms that are applied to cached keys and values before quantization.
These transforms can be paired with per-token clipped quantizers.
For QuEST, our analysis provides a clipping-aware near-optimality guarantee among sensitivity-balanced transforms under the stated noise model and assumptions.
Our end-to-end downstream benchmarks pair the transforms with an OSCAR-style percentile-clipped affine quantizer.
The results show that the WUSH transform can improve the quality of low-bit KV cache quantization.
One limitation is that the dense key-side transform must remain online because it is applied after RoPE, which adds extra computation during inference.
The value-side transform can be folded into the model weights, though.
WUSH-KV supports the use of a more general calibration Hessian to construct the transforms.
Future work could study cheaper structured transforms, stronger attention-aware sensitivity measures, and system-level optimization and throughput evaluation.

\section*{Acknowledgements}
\addcontentsline{toc}{section}{Acknowledgements}

This research was funded in part by the Austrian Science Fund (FWF) 10.55776/COE12 and was partially supported by a generous grant from NVIDIA.
The authors would like to thank Verda Cloud for computational support, and in particular, Paul Chang for his consistent, prompt, and generous help throughout the project.

\bibliography{bibliography}

@inproceedings{
chen2026wush,
title={{WUSH}: Near-Optimal Adaptive Transforms for {LLM} Quantization},
author={Jiale Chen and Vage Egiazarian and Roberto L. Castro and Torsten Hoefler and Dan Alistarh},
booktitle={Forty-third International Conference on Machine Learning},
year={2026},
url={https://openreview.net/forum?id=ZsECxUkbKB}
}

@InProceedings{pmlr-v267-panferov25a,
  title = 	 {{Q}u{EST}: Stable Training of {LLM}s with 1-Bit Weights and Activations},
  author =       {Panferov, Andrei and Chen, Jiale and Tabesh, Soroush and Nikdan, Mahdi and Alistarh, Dan},
  booktitle = 	 {Proceedings of the 42nd International Conference on Machine Learning},
  pages = 	 {47820--47836},
  year = 	 {2025},
  editor = 	 {Singh, Aarti and Fazel, Maryam and Hsu, Daniel and Lacoste-Julien, Simon and Berkenkamp, Felix and Maharaj, Tegan and Wagstaff, Kiri and Zhu, Jerry},
  volume = 	 {267},
  series = 	 {Proceedings of Machine Learning Research},
  month = 	 {13--19 Jul},
  publisher =    {PMLR},
  url = 	 {https://proceedings.mlr.press/v267/panferov25a.html}
}

@inproceedings{NEURIPS2024_b5b93943,
 author = {Ashkboos, Saleh and Mohtashami, Amirkeivan and Croci, Maximilian L. and Li, Bo and Cameron, Pashmina and Jaggi, Martin and Alistarh, Dan and Hoefler, Torsten and Hensman, James},
 booktitle = {Advances in Neural Information Processing Systems},
 doi = {10.52202/079017-3180},
 editor = {A. Globerson and L. Mackey and D. Belgrave and A. Fan and U. Paquet and J. Tomczak and C. Zhang},
 pages = {100213--100240},
 publisher = {Curran Associates, Inc.},
 title = {{Q}ua{R}ot: Outlier-Free 4-Bit Inference in Rotated {LLM}s},
 url = {https://proceedings.neurips.cc/paper_files/paper/2024/file/b5b939436789f76f08b9d0da5e81af7c-Paper-Conference.pdf},
 volume = {37},
 year = {2024}
}

@InProceedings{pmlr-v235-liu24bz,
  title = 	 {{KIVI}: A Tuning-Free Asymmetric 2bit Quantization for {KV} Cache},
  author =       {Liu, Zirui and Yuan, Jiayi and Jin, Hongye and Zhong, Shaochen and Xu, Zhaozhuo and Braverman, Vladimir and Chen, Beidi and Hu, Xia},
  booktitle = 	 {Proceedings of the 41st International Conference on Machine Learning},
  pages = 	 {32332--32344},
  year = 	 {2024},
  editor = 	 {Salakhutdinov, Ruslan and Kolter, Zico and Heller, Katherine and Weller, Adrian and Oliver, Nuria and Scarlett, Jonathan and Berkenkamp, Felix},
  volume = 	 {235},
  series = 	 {Proceedings of Machine Learning Research},
  month = 	 {21--27 Jul},
  publisher =    {PMLR},
  url = 	 {https://proceedings.mlr.press/v235/liu24bz.html}
}

@inproceedings{NEURIPS2024_028fcbcf,
 author = {Hooper, Coleman and Kim, Sehoon and Mohammadzadeh, Hiva and Mahoney, Michael W. and Shao, Yakun Sophia and Keutzer, Kurt and Gholami, Amir},
 booktitle = {Advances in Neural Information Processing Systems},
 doi = {10.52202/079017-0040},
 editor = {A. Globerson and L. Mackey and D. Belgrave and A. Fan and U. Paquet and J. Tomczak and C. Zhang},
 pages = {1270--1303},
 publisher = {Curran Associates, Inc.},
 title = {{KVQ}uant: Towards 10 Million Context Length {LLM} Inference with {KV} Cache Quantization},
 url = {https://proceedings.neurips.cc/paper_files/paper/2024/file/028fcbcf85435d39a40c4d61b42c99a4-Paper-Conference.pdf},
 volume = {37},
 year = {2024}
}

@inproceedings{ICLR2025_e5b1c0d4,
 author = {Liu, Zechun and Zhao, Changsheng and Fedorov, Igor and Soran, Bilge and Choudhary, Dhruv and Krishnamoorthi, Raghuraman and Chandra, Vikas and Tian, Yuandong and Blankevoort, Tijmen},
 booktitle = {International Conference on Learning Representations},
 editor = {Y. Yue and A. Garg and N. Peng and F. Sha and R. Yu},
 pages = {92009--92032},
 title = {{S}pin{Q}uant: {LLM} Quantization with Learned Rotations},
 url = {https://proceedings.iclr.cc/paper_files/paper/2025/file/e5b1c0d4866f72393c522c8a00eed4eb-Paper-Conference.pdf},
 volume = {2025},
 year = {2025}
}

@InProceedings{pmlr-v267-sun25l,
  title = 	 {{F}lat{Q}uant: Flatness Matters for {LLM} Quantization},
  author =       {Sun, Yuxuan and Liu, Ruikang and Bai, Haoli and Bao, Han and Zhao, Kang and Li, Yuening and Hu, Jiaxin and Yu, Xianzhi and Hou, Lu and Yuan, Chun and Jiang, Xin and Liu, Wulong and Yao, Jun},
  booktitle = 	 {Proceedings of the 42nd International Conference on Machine Learning},
  pages = 	 {57587--57613},
  year = 	 {2025},
  editor = 	 {Singh, Aarti and Fazel, Maryam and Hsu, Daniel and Lacoste-Julien, Simon and Berkenkamp, Felix and Maharaj, Tegan and Wagstaff, Kiri and Zhu, Jerry},
  volume = 	 {267},
  series = 	 {Proceedings of Machine Learning Research},
  month = 	 {13--19 Jul},
  publisher =    {PMLR},
  url = 	 {https://proceedings.mlr.press/v267/sun25l.html}
}

@inproceedings{ijcai2025p690,
  title     = {{R}otate{KV}: Accurate and Robust 2-Bit {KV} Cache Quantization for {LLM}s via Outlier-Aware Adaptive Rotations },
  author    = {Su, Zunhai and Wei, Hanyu and Chen, Zhe and Shen, Wang and Li, Linge and Yu, Huangqi and Yuan, Kehong},
  booktitle = {Proceedings of the Thirty-Fourth International Joint Conference on
               Artificial Intelligence, {IJCAI-25}},
  publisher = {International Joint Conferences on Artificial Intelligence Organization},
  editor    = {James Kwok},
  pages     = {6200--6208},
  year      = {2025},
  month     = {8},
  note      = {Main Track},
  doi       = {10.24963/ijcai.2025/690},
  url       = {https://www.ijcai.org/proceedings/2025/690},
}

@inproceedings{ICLR2026_5c802ef3,
 author = {Zandieh, Amir and Daliri, Majid and Hadian, Majid and Mirrokni, Vahab},
 booktitle = {International Conference on Learning Representations},
 editor = {C. Vondrick and B. Hariharan and C. Raffel and L. Pinto and D. Yang and A. Faust},
 pages = {56418--56439},
 title = {{T}urbo{Q}uant: Online Vector Quantization with Near-optimal Distortion Rate},
 url = {https://proceedings.iclr.cc/paper_files/paper/2026/file/5c802ef38ab6e366c2ea06eee554c088-Paper-Conference.pdf},
 volume = {2026},
 year = {2026}
}

@misc{zhou2026oscarofflinespectralcovarianceaware,
      title={{OSCAR}: Offline Spectral Covariance-Aware Rotation for 2-bit {KV} Cache Quantization},
      author={Zhongzhu Zhou and Donglin Zhuang and Jisen Li and Ziyan Chen and Shuaiwen Leon Song and Ben Athiwaratkun and Xiaoxia Wu},
      year={2026},
      eprint={2605.17757},
      archivePrefix={arXiv},
      primaryClass={cs.LG},
      url={https://arxiv.org/abs/2605.17757}, 
}

@misc{benbasat2026noteturboquantearlierdriveeden,
      title={A Note on {T}urbo{Q}uant and the Earlier {DRIVE}/{EDEN} Line of Work},
      author={Ran Ben-Basat and Yaniv Ben-Itzhak and Gal Mendelson and Michael Mitzenmacher and Amit Portnoy and Shay Vargaftik},
      year={2026},
      eprint={2604.18555},
      archivePrefix={arXiv},
      primaryClass={cs.LG},
      url={https://arxiv.org/abs/2604.18555}, 
}

@inproceedings{NEURIPS2024_724be447,
 author = {Zheng, Lianmin and Yin, Liangsheng and Xie, Zhiqiang and Sun, Chuyue and Huang, Jeff and Yu, Cody Hao and Cao, Shiyi and Kozyrakis, Christos and Stoica, Ion and Gonzalez, Joseph E. and Barrett, Clark and Sheng, Ying},
 booktitle = {Advances in Neural Information Processing Systems},
 doi = {10.52202/079017-2000},
 editor = {A. Globerson and L. Mackey and D. Belgrave and A. Fan and U. Paquet and J. Tomczak and C. Zhang},
 pages = {62557--62583},
 publisher = {Curran Associates, Inc.},
 title = {{SGL}ang: Efficient Execution of Structured Language Model Programs},
 url = {https://proceedings.neurips.cc/paper_files/paper/2024/file/724be4472168f31ba1c9ac630f15dec8-Paper-Conference.pdf},
 volume = {37},
 year = {2024}
}

@misc{aime25,
  title     = {{A}merican Invitational Mathematics Examination ({AIME}) 2025},
  author    = {Zhang, Yifan and Math-AI Team},
  year      = {2025},
  publisher = {HuggingFace},
  url       = {https://huggingface.co/datasets/math-ai/aime25}
}

@inproceedings{ICLR2024_aca97732,
 author = {Lightman, Hunter and Kosaraju, Vineet and Burda, Yuri and Edwards, Harrison and Baker, Bowen and Lee, Teddy and Leike, Jan and Schulman, John  and Sutskever, Ilya and Cobbe, Karl},
 booktitle = {International Conference on Learning Representations},
 editor = {B. Kim and Y. Yue and S. Chaudhuri and K. Fragkiadaki and M. Khan and Y. Sun},
 pages = {39578--39601},
 title = {Let\textquotesingle s Verify Step by Step},
 url = {https://proceedings.iclr.cc/paper_files/paper/2024/file/aca97732e30bcf1303bc22ac3924fd16-Paper-Conference.pdf},
 volume = {2024},
 year = {2024}
}

@inproceedings{
rein2024gpqa,
title={{GPQA}: A Graduate-Level {G}oogle-Proof {Q}\&{A} Benchmark},
author={David Rein and Betty Li Hou and Asa Cooper Stickland and Jackson Petty and Richard Yuanzhe Pang and Julien Dirani and Julian Michael and Samuel R. Bowman},
booktitle={First Conference on Language Modeling},
year={2024},
url={https://openreview.net/forum?id=Ti67584b98}
}

@inproceedings{
jain2025livecodebench,
title={{L}ive{C}ode{B}ench: Holistic and Contamination Free Evaluation of Large Language Models for Code},
author={Naman Jain and King Han and Alex Gu and Wen-Ding Li and Fanjia Yan and Tianjun Zhang and Sida Wang and Armando Solar-Lezama and Koushik Sen and Ion Stoica},
booktitle={The Thirteenth International Conference on Learning Representations},
year={2025},
url={https://openreview.net/forum?id=chfJJYC3iL}
}

@inproceedings{
hsieh2024ruler,
title={{RULER}: What{\textquoteright}s the Real Context Size of Your Long-Context Language Models?},
author={Cheng-Ping Hsieh and Simeng Sun and Samuel Kriman and Shantanu Acharya and Dima Rekesh and Fei Jia and Boris Ginsburg},
booktitle={First Conference on Language Modeling},
year={2024},
url={https://openreview.net/forum?id=kIoBbc76Sy}
}

@misc{vodrahalli2024michelangelolongcontextevaluations,
      title={{M}ichelangelo: Long Context Evaluations Beyond Haystacks via Latent Structure Queries}, 
      author={Kiran Vodrahalli and Santiago Ontanon and Nilesh Tripuraneni and Kelvin Xu and Sanil Jain and Rakesh Shivanna and Jeffrey Hui and Nishanth Dikkala and Mehran Kazemi and Bahare Fatemi and Rohan Anil and Ethan Dyer and Siamak Shakeri and Roopali Vij and Harsh Mehta and Vinay Ramasesh and Quoc Le and Ed Chi and Yifeng Lu and Orhan Firat and Angeliki Lazaridou and Jean-Baptiste Lespiau and Nithya Attaluri and Kate Olszewska},
      year={2024},
      eprint={2409.12640},
      archivePrefix={arXiv},
      primaryClass={cs.CL},
      url={https://arxiv.org/abs/2409.12640}, 
}

@inproceedings{NEURIPS2024_370df50c,
 author = {Penedo, Guilherme and Kydl\'{\i}\v{c}ek, Hynek and allal, Loubna Ben and Lozhkov, Anton and Mitchell, Margaret and Raffel, Colin and Von Werra, Leandro and Wolf, Thomas},
 booktitle = {Advances in Neural Information Processing Systems},
 doi = {10.52202/079017-0970},
 editor = {A. Globerson and L. Mackey and D. Belgrave and A. Fan and U. Paquet and J. Tomczak and C. Zhang},
 pages = {30811--30849},
 publisher = {Curran Associates, Inc.},
 title = {The {F}ine{W}eb Datasets: Decanting the Web for the Finest Text Data at Scale},
 url = {https://proceedings.neurips.cc/paper_files/paper/2024/file/370df50ccfdf8bde18f8f9c2d9151bda-Paper-Datasets_and_Benchmarks_Track.pdf},
 volume = {37},
 year = {2024}
}

@inproceedings{
merity2017pointer,
title={Pointer Sentinel Mixture Models},
author={Stephen Merity and Caiming Xiong and James Bradbury and Richard Socher},
booktitle={International Conference on Learning Representations},
year={2017},
url={https://openreview.net/forum?id=Byj72udxe}
}

@InProceedings{pmlr-v267-shutova25a,
  title = 	 {Cache Me If You Must: Adaptive Key-Value Quantization for Large Language Models},
  author =       {Shutova, Alina and Malinovskii, Vladimir and Egiazarian, Vage and Kuznedelev, Denis and Mazur, Denis and Nikita, Surkov and Ermakov, Ivan and Alistarh, Dan},
  booktitle = 	 {Proceedings of the 42nd International Conference on Machine Learning},
  pages = 	 {55451--55473},
  year = 	 {2025},
  editor = 	 {Singh, Aarti and Fazel, Maryam and Hsu, Daniel and Lacoste-Julien, Simon and Berkenkamp, Felix and Maharaj, Tegan and Wagstaff, Kiri and Zhu, Jerry},
  volume = 	 {267},
  series = 	 {Proceedings of Machine Learning Research},
  month = 	 {13--19 Jul},
  publisher =    {PMLR},
  url = 	 {https://proceedings.mlr.press/v267/shutova25a.html}
}
\bibliographystyle{arxiv}
\addcontentsline{toc}{section}{References}

\appendix

\section{Alternative Key-Transform Placements}
\label{sec:alternative_key_placements}

This appendix expands the design tradeoff summarized in \cref{sec:method}.
We first characterize transforms that can pass through RoPE and then consider unrestricted transforms placed before RoPE.

\subsection{RoPE-commuting transforms}

On the basis that groups the rotary coordinate pairs, standard RoPE has the form
\begin{equation}
\bm{R}_{s} = \operatorname{blockdiag}_{j} \left( \bm{R} \left( s \omega_{j} \right) \right)
, \qquad
\bm{R} \left( \theta \right) =
\begin{pmatrix}
\cos \theta & -\sin \theta \\
\sin \theta & \cos \theta
\end{pmatrix}
.
\end{equation}
For distinct nondegenerate rotary frequencies, the real invertible transforms that commute with every $\bm{R}_{s}$ are
\begin{equation}
\bm{T}_{\mathrm{K} \left( h \right)} = \operatorname{blockdiag}_{j} \left( \rho_{h,j} \bm{R} \left( \theta_{h,j} \right) \right)
, \qquad
\bm{T}_{\mathrm{K} \left( h \right)} \bm{R}_{s} = \bm{R}_{s} \bm{T}_{\mathrm{K} \left( h \right)}
, \qquad
\rho_{h,j} > 0
.
\end{equation}
Each two-dimensional block lies in the span of $\left( \begin{smallmatrix} 1 & 0 \\ 0 & 1 \end{smallmatrix} \right)$ and $\left( \begin{smallmatrix} 0 & -1 \\ 1 & 0 \end{smallmatrix} \right)$, while the distinct frequencies prevent mixing between rotary pairs.
In the absence of post-projection normalization, such transforms and their inverse-transposes could pass through RoPE and be folded into the key and query projections.

A learned coordinatewise normalization further restricts exact folding.
For example, write RMS normalization with gain $\bm{\nu}$ as
\begin{equation}
\operatorname{Norm} \left( \bm{x} \right)
= d^{\frac{1}{2}} \left\| \bm{x} \right\|^{-1} \operatorname{diag} \left( \bm{\nu} \right) \bm{x}
.
\end{equation}
Moving a linear transform through this fixed normalization for every input requires
\begin{equation}
\bm{T} \operatorname{Norm} \left( \bm{x} \right) = \operatorname{Norm} \left( \bm{T} \bm{x} \right) \quad \forall\, \bm{x} \ne \bm{0}
\quad \Longrightarrow \quad
\bm{T}^{\top} \bm{T} = \mathbf{I}
, \qquad
\bm{T} \operatorname{diag} \left( \bm{\nu} \right) = \operatorname{diag} \left( \bm{\nu} \right) \bm{T}
.
\end{equation}
For nonzero gain entries, equality for every nonzero input forces $\bm{T}$ to commute with $\operatorname{diag} \left( \bm{\nu} \right)$ and preserve the RMS denominator ($\bm{T}^{\top} \bm{T} = \mathbf{I}$).
The corresponding conditions must hold for the key transform and its inverse-transpose under the respective key and query normalizations.
For generic learned gains, combining these conditions with RoPE commutation leaves only $\bm{T}_{\mathrm{K} \left( h \right),j} \in \left\{ \left( \begin{smallmatrix} 1 & 0 \\ 0 & 1 \end{smallmatrix} \right), \left( \begin{smallmatrix} -1 & 0 \\ 0 & -1 \end{smallmatrix} \right) \right\}$.
These paired sign flips leave coordinate magnitudes unchanged and therefore do not redistribute quantization difficulty for a symmetric quantizer such as \cref{eq:quantizer}.

If the learned normalization gains $\bm{\nu}$ are also allowed to change as part of the KV-cache quantization reparameterization, additional transforms may become foldable.
However, such reparameterizations remain constrained by RoPE and by the structure of the normalization, and introduce additional design choices beyond folding a transform into fixed projection and normalization parameters.
We therefore restrict our discussion to the fixed-normalization setting above.

\subsection{Full pre-RoPE transforms}

A generic transform placed before RoPE restores the full $d \times d$ freedom but introduces position-dependent compensation.
Let $\bm{k}_{h,s}$ and $\bm{q}_{h,g,t}$ denote the key at token position $s$ and the query at token position $t$, respectively.
Since $\bm{R}_{s}$ is orthogonal, the normalized pre-RoPE key is $\bm{R}_{s}^{\top} \bm{k}_{h,s}$.
If the cache instead stores $\bm{T}_{\mathrm{K} \left( h \right)} \bm{R}_{s}^{\top} \bm{k}_{h,s}$, then
\begin{equation}
\bm{k}_{h,s}^{\top} \bm{q}_{h,g,t}
= \left( \bm{T}_{\mathrm{K} \left( h \right)} \bm{R}_{s}^{\top} \bm{k}_{h,s} \right)^{\top} \bm{T}_{\mathrm{K} \left( h \right)}^{-\top} \bm{R}_{s}^{\top} \bm{q}_{h,g,t}
.
\end{equation}
Decoding must therefore restore each cached key before applying its positional rotation or incorporate a different compensation for every cache position.
The current post-RoPE placement instead retains a full transform and a simple cache representation at the cost of fixed online transforms for each new key and query.

\section{Clipping-Aware Near-Optimality for the QuEST Quantizer}
\label{sec:optimality}

We prove \cref{thm:near_optimal} by starting from the unclipped rounding loss, which WUSH minimizes, and then controlling the effect of clipping.
Clipping adds error outside the range and removes rounding noise from those coordinates.
Both effects must be included in the comparison.

\subsection{Rounding and clipping errors}
\label{sec:noise_model}

We keep $\bm{X}$ and $\bm{H}$ fixed, with $\bm{M} = \bm{X} \bm{X}^{\top}$ as in \cref{sec:wush}.
Each column $\bm{x}_{j}$ is one quantization group.
Following \citet{chen2026wush}, we approximate rounding with independent uniform noise, but keep clipping exact.
This is a model of the rounding error, not an identity for the deterministic quantizer.

Fix an invertible $\bm{T}$.
For group $j$, its RMS after the transform and its quantization step are
\begin{equation}
 r_{j} = d^{-\frac{1}{2}} \left\| \bm{T} \bm{x}_{j} \right\|
 , \qquad
 \Delta_{j} = 2 \alpha_{b} \left(2^{b}-1\right)^{-1} r_j
 .
\label{eq:rms_coordinates}
\end{equation}
The clipping endpoints are $\pm\alpha_{b}r_{j}$.
The error caused by clipping alone, before undoing the transform, is
\begin{equation}
\bm{e}_{j}
= \operatorname{clip} \left( \bm{T}\bm{x}_{j}, -\alpha_{b}r_{j}, \alpha_{b}r_{j} \right)
- \bm{T}\bm{x}_{j}
.
\label{eq:clipping_residual}
\end{equation}
Here $\operatorname{clip}$ acts coordinatewise, and $e_{j,i}$ denotes coordinate $i$ of group $j$.
Inside the range, $e_{j,i}=0$ and we replace the rounding error with $\Delta_{j}\xi_{j,i}$, where all $\xi_{j,i}$ are independent and uniformly distributed on $[-2^{-1},2^{-1}]$.
Outside the range, the error is exactly $e_{j,i}$.
Thus, the total round-trip perturbation $\bm{\varepsilon}_{j}$ satisfies
\begin{equation}
\left( \bm{T}\bm{\varepsilon}_{j} \right)_{i}
= \begin{cases}
\Delta_{j}\xi_{j,i}, & \left| (\bm{T}\bm{x}_{j})_{i} \right| \le \alpha_{b}r_{j}, \\
e_{j,i}, & \left| (\bm{T}\bm{x}_{j})_{i} \right| > \alpha_{b}r_{j}.
\end{cases}
\label{eq:saturated_noise_model}
\end{equation}
Zero groups have $r_j=\Delta_j=0$ and zero error.
All group quantities depend on $\bm{T}$.
We suppress that dependence in their subscripts.

To measure errors in transformed coordinates, write
$\bm{B}_{\bm{T}} = \bm{T}^{-\top}\bm{H}\bm{T}^{-1}$.
Then $\bm{\varepsilon}_{j}^{\top}\bm{H}\bm{\varepsilon}_{j}
= (\bm{T}\bm{\varepsilon}_{j})^{\top}\bm{B}_{\bm{T}}(\bm{T}\bm{\varepsilon}_{j})$.
The diagonal entry $(\bm{B}_{\bm{T}})_{ii}$ therefore measures sensitivity to an error in coordinate $i$.
The modeled output loss is $\ell(\bm{T})=2^{-1}\sum_j\bm{\varepsilon}_{j}^{\top}\bm{H}\bm{\varepsilon}_{j}$.
Expectations below are over the rounding noise.

\textbf{An unclipped reference.}
Uniform rounding noise with step $\Delta_j$ has variance $12^{-1}\Delta_j^2$.
Applying it to every coordinate without clipping would give the expected loss
\begin{equation}
\begin{aligned}
\ell_{\mathrm{rnd}}(\bm{T})
&= 24^{-1}\sum_j\Delta_j^2\operatorname{tr}(\bm{B}_{\bm{T}}) \\
&= 6^{-1}d^{-1}\alpha_b^2(2^b-1)^{-2}
\operatorname{tr}(\bm{T}\bm{M}\bm{T}^{\top})
\operatorname{tr}(\bm{B}_{\bm{T}})
.
\end{aligned}
\label{eq:expected_loss}
\end{equation}
The last equality uses $\sum_j r_j^2=d^{-1}\operatorname{tr}(\bm{T}\bm{M}\bm{T}^{\top})$.
This reference loss is a deterministic quantity, distinct from the clipped loss.

\textbf{The clipped loss.}
Independence and zero mean remove all cross terms involving rounding noise, giving
\begin{equation}
\begin{aligned}
\mathbb{E}[\ell(\bm{T})]
={}& 24^{-1}\sum_j\Delta_j^2
\sum_{i:\,|(\bm{T}\bm{x}_j)_i|\le\alpha_b r_j}(\bm{B}_{\bm{T}})_{ii} \\
&+ 2^{-1}\sum_j\bm{e}_j^{\top}\bm{B}_{\bm{T}}\bm{e}_j
.
\end{aligned}
\label{eq:clipped_loss_decomposition}
\end{equation}
The first term counts rounding only inside the clipping range.
The second measures the exact clipping error.
Clipped coordinates no longer receive rounding noise, so the total can be smaller than the unclipped reference.

\subsection{The unclipped optimum}
\label{sec:optimal_transform}

Only the trace product in \cref{eq:expected_loss} depends on the transform.
The following result shows that WUSH minimizes it over all invertible transforms.

\begin{theorem}
\label{thm:optimal}
Let $\bm{M},\bm{H}\in\mathbb{R}^{d\times d}$ be symmetric positive definite, and let $\sigma_1,\ldots,\sigma_d$ be the singular values of $\bm{M}^{\frac{1}{2}}\bm{H}^{\frac{1}{2}}$.\footnote{For a symmetric positive definite matrix, the square root is obtained by taking the positive square roots of its eigenvalues and keeping its eigenvectors.}
Every invertible $\bm{T}$ satisfies
\begin{equation}
\operatorname{tr}(\bm{T}\bm{M}\bm{T}^{\top})
\operatorname{tr}(\bm{T}^{-\top}\bm{H}\bm{T}^{-1})
\ge \left(\sum_i\sigma_i\right)^2
,
\label{eq:optimality}
\end{equation}
and \cref{eq:wush} with $\gamma=0$ attains the bound.
\end{theorem}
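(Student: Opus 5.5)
The plan is to reduce the trace product to a bound on a single matrix trace. Cauchy--Schwarz then gives the lower bound, and a direct computation with \cref{eq:wush} shows that $\gamma=0$ attains it.

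\textbf{Step 1: rewrite the two factors as Frobenius norms.} Set $\bm{A}=\bm{T}\bm{M}^{\frac{1}{2}}$ and $\bm{C}=\bm{T}^{-\top}\bm{H}^{\frac{1}{2}}$. Then
\begin{equation*}
\operatorname{tr}(\bm{T}\bm{M}\bm{T}^{\top})=\|\bm{A}\|_{\mathrm{F}}^2,
\qquad
\operatorname{tr}(\bm{T}^{-\top}\bm{H}\bm{T}^{-1})=\|\bm{C}\|_{\mathrm{F}}^2,
\end{equation*}
and the transform cancels in the cross product, $\bm{A}^{\top}\bm{C}=\bm{M}^{\frac{1}{2}}\bm{T}^{\top}\bm{T}^{-\top}\bm{H}^{\frac{1}{2}}=\bm{M}^{\frac{1}{2}}\bm{H}^{\frac{1}{2}}$.

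\textbf{Step 2: Cauchy--Schwarz over orthogonal matrices.} Take the polar factor: choose an orthogonal $\bm{O}$ with $\bm{M}^{\frac{1}{2}}\bm{H}^{\frac{1}{2}}\bm{O}$ symmetric positive semidefinite. Its trace is then $\sum_i\sigma_i$. Trace Cauchy--Schwarz gives
\begin{equation*}
\sum_i\sigma_i=\operatorname{tr}(\bm{A}^{\top}\bm{C}\bm{O})\le\|\bm{A}\|_{\mathrm{F}}\|\bm{C}\bm{O}\|_{\mathrm{F}}=\|\bm{A}\|_{\mathrm{F}}\|\bm{C}\|_{\mathrm{F}}.
\end{equation*}
Squaring yields \cref{eq:optimality}. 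This direction is routine.

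\textbf{Step 3: attainment by \cref{eq:wush} with $\gamma=0$.} The plan is to verify directly rather than characterize the equality case.

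\emph{Invariances.} The constant $c$ scales the two traces by $c^2$ and $c^{-2}$, and the orthogonal $\bm{\mathcal{H}}$ leaves both traces unchanged. So it suffices to take $\bm{T}=\bm{\Lambda}^{-\frac{1}{4}}\bm{U}^{\top}\bm{L}^{\top}$.

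\emph{First factor.} From $\bm{U}\bm{\Lambda}\bm{U}^{\top}=\bm{L}^{\top}\bm{M}\bm{L}$,
\begin{equation*}
\bm{T}\bm{M}\bm{T}^{\top}=\bm{\Lambda}^{-\frac{1}{4}}\bm{U}^{\top}\bm{L}^{\top}\bm{M}\bm{L}\bm{U}\bm{\Lambda}^{-\frac{1}{4}}=\bm{\Lambda}^{\frac{1}{2}}.
\end{equation*}

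\emph{Second factor.} We have $\bm{T}^{-1}=\bm{L}^{-\top}\bm{U}\bm{\Lambda}^{\frac{1}{4}}$, and $\bm{H}=\bm{L}\bm{L}^{\top}$. Hence
\begin{equation*}
\bm{T}^{-\top}\bm{H}\bm{T}^{-1}=\bm{\Lambda}^{\frac{1}{4}}\bm{U}^{\top}\bm{L}^{-1}\bm{L}\bm{L}^{\top}\bm{L}^{-\top}\bm{U}\bm{\Lambda}^{\frac{1}{4}}=\bm{\Lambda}^{\frac{1}{2}}.
\end{equation*}
This also gives the sensitivity balance after the Hadamard step, which is the paper's \cref{eq:equalization}.

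\emph{The product.} Both traces equal $\operatorname{tr}(\bm{\Lambda}^{\frac{1}{2}})$, so the product is $(\operatorname{tr}\bm{\Lambda}^{\frac{1}{2}})^2$.

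\textbf{Step 4: identify the eigenvalues with the singular values.} This is the only step needing care. The eigenvalues $\bm{\Lambda}$ of $\bm{L}^{\top}\bm{M}\bm{L}$ coincide with those of the similar products $\bm{M}\bm{L}\bm{L}^{\top}=\bm{M}\bm{H}$ and $\bm{M}^{\frac{1}{2}}\bm{H}\bm{M}^{\frac{1}{2}}$. The latter equals $(\bm{M}^{\frac{1}{2}}\bm{H}^{\frac{1}{2}})(\bm{M}^{\frac{1}{2}}\bm{H}^{\frac{1}{2}})^{\top}$, whose eigenvalues are $\sigma_i^2$. Therefore $\operatorname{tr}\bm{\Lambda}^{\frac{1}{2}}=\sum_i\sigma_i$, and the bound is attained.
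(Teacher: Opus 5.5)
Your proof is correct and follows the paper's argument. Your polar factor $\bm{O}$ is exactly $\bm{V}\bm{U}^{\top}$ from the SVD of $\bm{M}^{\frac{1}{2}}\bm{H}^{\frac{1}{2}}$, so your trace Cauchy--Schwarz is the paper's sum over singular-vector pairs written in matrix form, and your attainment computation (both factors reduce to $\bm{\Lambda}^{\frac{1}{2}}$ up to $c^{\pm 2}$ and $\bm{\mathcal{H}}$, as in \cref{eq:equalization}) matches the paper's, with the only cosmetic difference being that you identify $\operatorname{tr}\bm{\Lambda}^{\frac{1}{2}}=\sum_i\sigma_i$ via the shared spectrum of $\bm{L}^{\top}(\bm{M}\bm{L})$ and $(\bm{M}\bm{L})\bm{L}^{\top}$ plus similarity by $\bm{M}^{\frac{1}{2}}$, rather than through the singular values of $\bm{M}^{\frac{1}{2}}\bm{L}$.
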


\begin{proof}
Take a singular value decomposition
$\bm{M}^{\frac{1}{2}}\bm{H}^{\frac{1}{2}}=\sum_i\sigma_i\bm{u}_i\bm{v}_i^{\top}$,
where $\{\bm{u}_i\}$ and $\{\bm{v}_i\}$ are orthonormal bases.
Inserting $\bm{T}^{\top}\bm{T}^{-\top}=\mathbf{I}$ and applying Cauchy-Schwarz gives
\begin{equation}
\begin{aligned}
\sum_i\sigma_i
&=\sum_i
(\bm{T}\bm{M}^{\frac{1}{2}}\bm{u}_i)^{\top}
(\bm{T}^{-\top}\bm{H}^{\frac{1}{2}}\bm{v}_i) \\
&\le
\left(\sum_i\|\bm{T}\bm{M}^{\frac{1}{2}}\bm{u}_i\|^2\right)^{\frac{1}{2}}
\left(\sum_i\|\bm{T}^{-\top}\bm{H}^{\frac{1}{2}}\bm{v}_i\|^2\right)^{\frac{1}{2}}
.
\end{aligned}
\end{equation}
The two sums of squared norms are the traces in \cref{eq:optimality}, because the singular-vector bases are orthonormal.
Squaring proves the lower bound.

For attainment, set $\gamma=0$ in \cref{eq:wush}, so that $\bm{L}\bm{L}^{\top}=\bm{H}$ and $\bm{U}\bm{\Lambda}\bm{U}^{\top}=\bm{L}^{\top}\bm{M}\bm{L}$.
Substituting the WUSH transform gives
\begin{equation}
\bm{T}\bm{M}\bm{T}^{\top}
=c^2\bm{\mathcal{H}}\bm{\Lambda}^{\frac{1}{2}}\bm{\mathcal{H}}^{\top}
,\qquad
\bm{T}^{-\top}\bm{H}\bm{T}^{-1}
=c^{-2}\bm{\mathcal{H}}\bm{\Lambda}^{\frac{1}{2}}\bm{\mathcal{H}}^{\top}
.
\label{eq:equalization}
\end{equation}
The diagonal of $\bm{\Lambda}^{\frac{1}{2}}$ contains the singular values of $\bm{M}^{\frac{1}{2}}\bm{L}$.
These are the $\sigma_i$, since $\bm{M}^{\frac{1}{2}}\bm{L}$ and $\bm{M}^{\frac{1}{2}}\bm{H}^{\frac{1}{2}}$ have the same product with their respective transposes.
The trace product is therefore $(\operatorname{tr}(\bm{\Lambda}^{\frac{1}{2}}))^2=(\sum_i\sigma_i)^2$.
\end{proof}

An orthogonal transform leaves both traces unchanged and gives $\operatorname{tr}(\bm{M})\operatorname{tr}(\bm{H})$.
This equals the minimum only when $\bm{M}$ and $\bm{H}$ are proportional.

For the clipped comparison, a transform is sensitivity-balanced when
\begin{equation}
(\bm{B}_{\bm{T}})_{ii}
=d^{-1}\operatorname{tr}(\bm{B}_{\bm{T}})
\qquad i=1,\ldots,d
.
\label{eq:sensitivity_balance}
\end{equation}
For the undamped WUSH transform this follows from \cref{eq:equalization}, since every entry of $\bm{\mathcal{H}}$ has magnitude $d^{-\frac{1}{2}}$.
Equal diagonal entries do not mean that $\bm{B}_{\bm{T}}$ is diagonal.

\subsection{The effect of clipping}
\label{sec:clipping}

We first bound clipping error for a scalar Gaussian.
Two assumptions then transfer this bound to the WUSH-transformed groups.
Finally, we bound how much clipping can reduce the rounding contribution of a competing transform.

\textbf{Gaussian clipping error.}
Let $\zeta\sim\mathcal{N}(0,1)$.
Clipping $\zeta$ to $[-\alpha,\alpha]$ incurs squared error $(|\zeta|-\alpha)_+^2$, where $(\cdot)_+$ takes the positive part.
The next lemma concerns clipping alone, not the full quantization error.

\begin{lemma}
\label{lem:gaussian_clip_bound}
For the Gaussian-optimal clip constant $\alpha_b$ in \cref{eq:quantizer},
\begin{equation}
\mathbb{E}\left[(|\zeta|-\alpha_b)_+^2\right]
\le (2^b-1)^{-2}
.
\label{eq:clip_fraction}
\end{equation}
Moreover, $\alpha_b\to\infty$ as $b\to\infty$.
\end{lemma}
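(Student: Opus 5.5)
The plan is to bound the clipping term through the optimality of $\alpha_b$ rather than by evaluating $\alpha_b$ explicitly.

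Write the fixed-scale quantizer on a standard Gaussian as $Q_\alpha(\zeta)$, with step $\Delta_\alpha = 2\alpha(2^b-1)^{-1}$ and reconstruction levels $\Delta_\alpha(k+2^{-1})$. Its MSE is
\begin{equation}
D(\alpha)=\mathbb{E}\left[(\zeta-Q_\alpha(\zeta))^2\right]=D_{\mathrm{in}}(\alpha)+\mathbb{E}\left[(|\zeta|-\alpha)_+^2\right].
\end{equation}
Here $D_{\mathrm{in}}$ collects rounding error inside $[-\alpha,\alpha]$. The outermost levels sit at $\pm\alpha$, so the saturated error is exactly the clipping error. Since $D_{\mathrm{in}}\ge 0$, the lemma would follow from $D(\alpha_b)\le(2^b-1)^{-2}$. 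That crude route fails, because $D_{\mathrm{in}}\approx\Delta^2/12=\alpha^2/(3(2^b-1)^2)$ already exceeds the target once $\alpha>\sqrt3$. I would therefore use the first-order condition at $\alpha_b$ instead.

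\textbf{Stationarity.} Since $Q_\alpha(\zeta)=\alpha\,q(\zeta/\alpha)$ with $q$ piecewise constant, differentiating $D$ in $\alpha$ (the boundary terms vanish by continuity of the Gaussian density and piecewise constancy) gives $\mathbb{E}[(\zeta-Q_{\alpha_b}(\zeta))\,Q_{\alpha_b}(\zeta)]=0$. I would split this expectation into the saturated region and the interior bins. On the saturated region $Q=\pm\alpha_b$ and the error has the sign of $\zeta$, which yields
\begin{equation}
\alpha_b\,\mathbb{E}\left[(|\zeta|-\alpha_b)_+\right]=-\sum_{\text{bins }I}\ell_I\,\mathbb{E}\left[(\zeta-\ell_I)\mathbbm{1}_{\zeta\in I}\right],
\end{equation}
where $\ell_I$ is the level of bin $I$. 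On each bin the quantity $\mathbb{E}[(\zeta-\ell_I)\mathbbm{1}_I]$ is a centered first moment under the Gaussian density. A Taylor expansion around the bin midpoint bounds it by roughly $\Delta^3\sup_I|\varphi'|/12$. Since $|\ell_I\varphi'(\ell_I)|\lesssim \ell_I^2\varphi(\ell_I)$, summing over bins gives a Riemann sum, and the right side is $O(\Delta^2)$, in fact at most about $\Delta^2\,\mathbb{E}[\zeta^2]/12$ times a constant. Hence $\mathbb{E}[(|\zeta|-\alpha_b)_+]\lesssim\Delta^2/\alpha_b$.

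\textbf{Second moment.} Next I convert the first moment into the second using the Gaussian identities
\begin{equation}
\mathbb{E}[(|\zeta|-a)_+]=2(\varphi(a)-a\bar\Phi(a)),\qquad \mathbb{E}[(|\zeta|-a)_+^2]=2((1+a^2)\bar\Phi(a)-a\varphi(a)).
\end{equation}
Together with Mills-ratio bounds these give $\mathbb{E}[(|\zeta|-a)_+^2]\le c\,a^{-1}\mathbb{E}[(|\zeta|-a)_+]$ for $a$ bounded away from $0$. Combining, the clipping term is at most $c'\Delta^2/\alpha_b^2=4c'(2^b-1)^{-2}$. Tracking constants should give the stated constant $1$. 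For small $b$ (for example $b=1,2$), where the asymptotic constants are loose, I would verify the claim directly from the known numerical values of $\alpha_b$.

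\textbf{Divergence of $\alpha_b$.} Comparing with $\alpha=\sqrt{2\ln 2^b}$ shows $D(\alpha_b)\le D(\alpha)\to0$. If $\alpha_b$ stayed bounded along a subsequence by some $A$, then $D(\alpha_b)\ge\mathbb{E}[(|\zeta|-A)_+^2]>0$, a contradiction.

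\textbf{Main obstacle.} The hardest step is the per-bin bias estimate: the cancellation inside each bin must be shown to make the interior contribution genuinely $O(\Delta^2)$ and not $O(\Delta)$. The constants also need to be controlled uniformly enough to reach the clean bound $(2^b-1)^{-2}$.
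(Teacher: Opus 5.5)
There is a genuine gap. Your stationarity identity $\mathbb{E}[\widehat\zeta(\zeta-\widehat\zeta)]=0$, with $\widehat\zeta=Q_{\alpha_b}(\zeta)$ and $\Delta=\Delta_{\alpha_b}$, is correct and is also the paper's starting point. Everything after it, however, is asymptotic. The per-bin Taylor estimate, the Riemann-sum comparison, and ``tracking constants should give $1$'' yield at best $\mathbb{E}[(|\zeta|-\alpha_b)_+^2]=O\bigl((2^b-1)^{-2}\bigr)$ as $b\to\infty$. The lemma asserts the explicit constant $1$ for every $b$. A numerical check ``for small $b$'' would close the argument only if each approximation were first made an explicit inequality, so that a finite threshold $b_0$ is identified. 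The proposal does neither, so the route may be completable but is not yet a proof.

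The split into saturated and interior parts also contains a concrete error. Once you remove $\{|\zeta|>\alpha_b\}$, the cells carrying the levels $\pm\alpha_b$ shrink to half-cells such as $[\alpha_b-\Delta/2,\alpha_b]$, with the level at an endpoint. There $\mathbb{E}[(\zeta-\alpha_b)\mathbbm{1}_I]\approx-8^{-1}\Delta^2\phi(\alpha_b)$ is not a centered moment. These two cells add about $4^{-1}\alpha_b\Delta^2\phi(\alpha_b)$ to your right-hand side. That is still of order $\Delta^2$, but any constant tracking must include it.

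The missing idea is to make your per-bin bias estimate exact using $\phi'(x)=-x\phi(x)$, while keeping the unbounded outer cells whole instead of splitting at $\pm\alpha_b$. On the cell with level $a$, let $F_a(x)=a\bigl(8^{-1}\Delta^2-2^{-1}(x-a)^2\bigr)$. Then $F_a'(x)=a(a-x)$, $F_a$ vanishes at every finite cell endpoint (these are midpoints), and $F_a\phi\to0$ at $\pm\infty$. Integrating by parts on each cell and summing gives the exact identity
\begin{equation*}
0=\mathbb{E}\bigl[\widehat\zeta(\widehat\zeta-\zeta)\bigr]
=8^{-1}\Delta^2\,\mathbb{E}[\zeta\widehat\zeta]
-2^{-1}\mathbb{E}\bigl[\zeta\widehat\zeta(\widehat\zeta-\zeta)^2\bigr].
\end{equation*}
Stationarity also gives $\mathbb{E}[\zeta\widehat\zeta]=\mathbb{E}[\widehat\zeta^2]=1-\mathbb{E}[(\zeta-\widehat\zeta)^2]\le1$. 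Moreover, $\zeta\widehat\zeta\ge0$ everywhere and $\zeta\widehat\zeta=\alpha_b|\zeta|\ge\alpha_b^2$ where $|\zeta|>\alpha_b$. Hence $\alpha_b^2\,\mathbb{E}[(|\zeta|-\alpha_b)_+^2]\le\mathbb{E}[\zeta\widehat\zeta(\widehat\zeta-\zeta)^2]\le4^{-1}\Delta^2$, which is exactly $(2^b-1)^{-2}$ for every $b$. The weight $\zeta\widehat\zeta$ performs your first-to-second-moment conversion directly, so no Mills-ratio bound, Riemann sum, or numerics are needed.

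Two smaller points. When you differentiate $D$, the boundary terms cancel because the thresholds are midpoints, so the squared error is continuous across them; continuity of the density alone is not the reason. The paper avoids differentiating $D$ by comparing with the rescaled levels $\alpha\alpha_b^{-1}\widehat\zeta$, whose error is a quadratic in $\alpha$ minimized at $\alpha_b$. Your divergence argument for $\alpha_b$ is fine and matches the paper's.
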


\begin{proof}
For $\alpha>0$, let $\mathcal{Q}_{\alpha}$ round to the nearest of the $2^b$ equally spaced levels from $-\alpha$ to $\alpha$.
This scalar grid is fixed across samples, without a sample-dependent RMS scale.
Write $\widehat{\zeta}=\mathcal{Q}_{\alpha_b}(\zeta)$ and $\Delta=2\alpha_b(2^b-1)^{-1}$ for the reconstruction and spacing at the optimal endpoint.
All expectations below are over $\zeta$.
We first use optimality to bound $\mathbb{E}[\zeta\widehat{\zeta}]$.
We then relate this expectation to a weighted squared error that bounds the clipping error.

\textit{1. Bound the expectation using the optimal scale.}
If we change the endpoint from $\alpha_b$ to $\alpha$, each grid level is multiplied by $\alpha\alpha_b^{-1}$.
Thus $\alpha\alpha_b^{-1}\widehat{\zeta}$ is a valid level on the new grid, although it need not be the nearest one to $\zeta$.
For every $\alpha>0$,
\begin{equation}
\mathbb{E}\left[(\widehat{\zeta}-\zeta)^2\right]
\le \mathbb{E}\left[(\mathcal{Q}_{\alpha}(\zeta)-\zeta)^2\right]
\le \mathbb{E}\left[(\alpha\alpha_b^{-1}\widehat{\zeta}-\zeta)^2\right]
.
\label{eq:gaussian_rescaling_comparison}
\end{equation}
The first inequality uses the optimality of $\alpha_b$ among all endpoints.
The second uses the fact that choosing the nearest level cannot be worse than choosing the rescaled old level.
Both inequalities are equalities at $\alpha=\alpha_b$.
Consequently, the last expectation is minimized there.
Keeping $\widehat{\zeta}$ fixed and expanding that expectation gives the quadratic
\begin{equation}
\mathbb{E}\left[(\alpha\alpha_b^{-1}\widehat{\zeta}-\zeta)^2\right]
=\alpha^2\alpha_b^{-2}\mathbb{E}[\widehat{\zeta}^2]
-2\alpha\alpha_b^{-1}\mathbb{E}[\zeta\widehat{\zeta}]+1
.
\label{eq:gaussian_rescaling_quadratic}
\end{equation}
Here the last term is $\mathbb{E}[\zeta^2]=1$.
The derivative with respect to $\alpha$ vanishes at the positive minimizer $\alpha_b$, so
\begin{equation}
0=2\alpha_b^{-1}\left(\mathbb{E}[\widehat{\zeta}^2]-\mathbb{E}[\zeta\widehat{\zeta}]\right)
\qquad\Longrightarrow\qquad
\mathbb{E}[\widehat{\zeta}^2]=\mathbb{E}[\zeta\widehat{\zeta}]
.
\label{eq:gaussian_optimality_identity}
\end{equation}
This equality is what makes Cauchy-Schwarz useful.
Both second moments are finite, since $\zeta$ is standard Gaussian and $|\widehat{\zeta}|\le\alpha_b$.
Cauchy-Schwarz does not require the two variables to be independent and gives
\begin{equation}
\left|\mathbb{E}[\zeta\widehat{\zeta}]\right|^2
\le \mathbb{E}[\zeta^2]\,\mathbb{E}[\widehat{\zeta}^2]
=\mathbb{E}[\widehat{\zeta}^2]
.
\label{eq:gaussian_cauchy_schwarz}
\end{equation}
Substituting \cref{eq:gaussian_optimality_identity} into the left-hand side yields
$(\mathbb{E}[\widehat{\zeta}^2])^2\le\mathbb{E}[\widehat{\zeta}^2]$.
This second moment is nonnegative.
If it is zero, it is already at most one.
Otherwise, dividing by it gives $\mathbb{E}[\widehat{\zeta}^2]\le1$.
Using \cref{eq:gaussian_optimality_identity} once more, we obtain
\begin{equation}
0\le\mathbb{E}[\zeta\widehat{\zeta}]
=\mathbb{E}[\widehat{\zeta}^{2}]
\le1
.
\label{eq:gaussian_stationarity}
\end{equation}
Thus, the upper bound of one follows from optimality and Cauchy-Schwarz together, not from Cauchy-Schwarz alone.

\textit{2. Relate this expectation to a squared error.}
The moment bound in \cref{eq:gaussian_stationarity} does not yet bound clipping.
For that, we use the Gaussian density $\phi$, whose derivative satisfies $\phi'(x)=-x\phi(x)$.
Consider one quantization bin with reconstruction level $a$.
On this bin, $a$ is constant, and the quantization error is $a-x$.
Define
\begin{equation}
F_a(x)=a\left(8^{-1}\Delta^2-2^{-1}(x-a)^2\right)
,\qquad
F_a'(x)=a(a-x)
.
\label{eq:gaussian_bin_antiderivative}
\end{equation}
The constant $8^{-1}\Delta^2$ is chosen to remove the boundary terms in integration by parts.
Indeed, every finite endpoint of the bin is a midpoint between adjacent levels, where $|x-a|=2^{-1}\Delta$ and hence $F_a(x)=0$.
For the two outer bins, $F_a(x)\phi(x)$ also tends to zero at the infinite endpoint because the Gaussian density decays faster than this quadratic grows.
It follows that
\begin{equation}
\begin{aligned}
\int_{\mathrm{bin}}a(a-x)\phi(x)\,\mathrm{d}x
&=\left[F_a(x)\phi(x)\right]_{\mathrm{endpoints}}
-\int_{\mathrm{bin}}F_a(x)\phi'(x)\,\mathrm{d}x \\
&=\int_{\mathrm{bin}}xF_a(x)\phi(x)\,\mathrm{d}x \\
&=8^{-1}\Delta^2\int_{\mathrm{bin}}xa\phi(x)\,\mathrm{d}x
-2^{-1}\int_{\mathrm{bin}}xa(x-a)^2\phi(x)\,\mathrm{d}x
.
\end{aligned}
\label{eq:gaussian_bin_integration}
\end{equation}
Now sum this equality over all bins.
On the bin with level $a$, the reconstruction $\widehat{\zeta}$ equals $a$.
The three integrals therefore become
\begin{equation}
\mathbb{E}[\widehat{\zeta}(\widehat{\zeta}-\zeta)]
=8^{-1}\Delta^2\mathbb{E}[\zeta\widehat{\zeta}]
-2^{-1}\mathbb{E}\left[\zeta\widehat{\zeta}(\widehat{\zeta}-\zeta)^2\right]
.
\label{eq:gaussian_summed_bins}
\end{equation}
The left-hand side is zero by \cref{eq:gaussian_optimality_identity}.
Moving the last term to the other side and multiplying by two gives the equality below.
The inequality then follows by substituting the upper bound $\mathbb{E}[\zeta\widehat{\zeta}]\le1$ from \cref{eq:gaussian_stationarity}.
\begin{equation}
\mathbb{E}\left[\zeta\widehat{\zeta}(\widehat{\zeta}-\zeta)^2\right]
=4^{-1}\Delta^2\mathbb{E}[\zeta\widehat{\zeta}]
\le4^{-1}\Delta^2
.
\label{eq:gaussian_bin_identity}
\end{equation}
This is a bound on the total squared quantization error weighted by $\zeta\widehat{\zeta}$.
The weight lets us extract the clipping error in the next step.

\textit{3. Keep the contribution from clipped samples.}
The symmetric grid reconstructs each nonzero input with the same sign, so $\zeta\widehat{\zeta}\ge0$ everywhere.
When $|\zeta|>\alpha_b$, the reconstruction is the endpoint with that sign.
Thus $\zeta\widehat{\zeta}=\alpha_b|\zeta|\ge\alpha_b^2$ and $(\widehat{\zeta}-\zeta)^2=(|\zeta|-\alpha_b)^2$.
When $|\zeta|\le\alpha_b$, the clipping error is zero and the weighted squared error is nonnegative.
Combining these two cases gives the pointwise inequality
\begin{equation}
\alpha_b^2(|\zeta|-\alpha_b)_+^2
\le\zeta\widehat{\zeta}(\widehat{\zeta}-\zeta)^2
.
\label{eq:gaussian_pointwise_clipping}
\end{equation}
Taking expectations, applying \cref{eq:gaussian_bin_identity}, and then multiplying by $\alpha_b^{-2}$ yields
\begin{equation}
\mathbb{E}\left[(|\zeta|-\alpha_b)_+^2\right]
\le\alpha_b^{-2}\mathbb{E}\left[\zeta\widehat{\zeta}(\widehat{\zeta}-\zeta)^2\right]
\le4^{-1}\alpha_b^{-2}\Delta^2
=(2^b-1)^{-2}
.
\label{eq:gaussian_clipping_conclusion}
\end{equation}
The last equality uses $\Delta=2\alpha_b(2^b-1)^{-1}$ and proves \cref{eq:clip_fraction}.

\textit{4. Show that the optimal endpoint grows with bitwidth.}
Consider the candidate endpoint $\alpha=\sqrt{b}$.
Inside its clipping range, the error is at most half the grid spacing, or $\sqrt{b}(2^b-1)^{-1}$.
Outside the range, the error is exactly the distance to the nearest endpoint.
Optimality of $\alpha_b$ therefore gives
\begin{equation}
\mathbb{E}\left[(\widehat{\zeta}-\zeta)^2\right]
\le\mathbb{E}\left[(\mathcal{Q}_{\sqrt{b}}(\zeta)-\zeta)^2\right]
\le b(2^b-1)^{-2}
+\mathbb{E}\left[(|\zeta|-\sqrt{b})_+^2\right]
\longrightarrow0
.
\label{eq:gaussian_vanishing_error}
\end{equation}
The first term tends to zero because the number of levels grows exponentially in $b$.
The second tends to zero by dominated convergence since its integrand tends pointwise to zero and is bounded by the integrable random variable $\zeta^2$.
If $\alpha_b$ stayed bounded along a subsequence, the Gaussian tail beyond a fixed finite endpoint would give a positive lower bound on the clipping error along that subsequence.
The total quantization error is at least its clipping contribution, contradicting \cref{eq:gaussian_vanishing_error}.
Hence $\alpha_b\to\infty$.
\end{proof}

\textbf{Conditions on the transformed groups.}
For the next two conditions, evaluate $r_j$ and $\bm{e}_j$ at $\bm{T}_{\star}$ and write $\bm{B}_{\star}=\bm{B}_{\bm{T}_{\star}}$.
Dividing each nonzero transformed group by its own RMS gives
\begin{equation}
\widetilde{\bm{x}}_j=r_j^{-1}\bm{T}_{\star}\bm{x}_j,
\qquad \|\widetilde{\bm{x}}_j\|^2=d.
\label{eq:normalized_group_energy}
\end{equation}
Set $\widetilde{\bm{x}}_j=\bm{0}$ for a zero group.
Let $C_{\mathrm{tail}},C_{\mathrm{align}}\ge1$ be fixed bounds for the following conditions.

The first condition bounds the fraction of normalized entries above each threshold by a multiple of the Gaussian tail,
\begin{equation}
\left(d\sum_j r_j^2\right)^{-1}
\sum_j r_j^2\sum_{i=1}^{d}\mathbbm{1}\{|\widetilde{x}_{j,i}|>\alpha\}
\le C_{\mathrm{tail}}\,\mathbb{P}(|\zeta|>\alpha)
\qquad \forall\,\alpha\ge\alpha_b
.
\label{eq:normalized_tail}
\end{equation}
The weights $r_j^2$ account for each group's contribution to squared error.
The second condition bounds how strongly clipping errors align with sensitive directions,
\begin{equation}
\sum_j\bm{e}_j^{\top}\bm{B}_{\star}\bm{e}_j
\le C_{\mathrm{align}}d^{-1}\operatorname{tr}(\bm{B}_{\star})
\sum_j\|\bm{e}_j\|^2
.
\label{eq:clipping_alignment}
\end{equation}
This controls the direction of the clipping error, not its magnitude.
It does not require independent errors and does not follow from equal diagonal entries alone.
Both conditions concern WUSH only.
A competing transform needs only to be sensitivity-balanced.
The bounds are independent of $d$, and must also remain fixed across bitwidths for the asymptotic statement in \cref{thm:near_optimal}.

\textbf{Upper bound at WUSH.}
Integrating \cref{eq:normalized_tail} against $2(\alpha-\alpha_b)\,\mathrm{d}\alpha$ over $\alpha\ge\alpha_b$ converts the tail bound into a bound on squared clipping error.
Then \cref{eq:clip_fraction} gives
\begin{equation}
\begin{aligned}
\sum_j\|\bm{e}_j\|^2
&=\sum_j r_j^2\sum_i(|\widetilde{x}_{j,i}|-\alpha_b)_+^2 \\
&\le C_{\mathrm{tail}}\,d\sum_j r_j^2\,
\mathbb{E}\left[(|\zeta|-\alpha_b)_+^2\right] \\
&\le C_{\mathrm{tail}}\,d(2^b-1)^{-2}\sum_j r_j^2
.
\end{aligned}
\label{eq:pooled_clipping_bound}
\end{equation}
By \cref{eq:clipping_alignment}, the clipping contribution to the loss is therefore at most
\begin{equation}
\begin{aligned}
2^{-1}\sum_j\bm{e}_j^{\top}\bm{B}_{\star}\bm{e}_j
&\le 2^{-1}C_{\mathrm{tail}}C_{\mathrm{align}}(2^b-1)^{-2}
\operatorname{tr}(\bm{B}_{\star})\sum_j r_j^2 \\
&=3C_{\mathrm{tail}}C_{\mathrm{align}}\alpha_b^{-2}
\ell_{\mathrm{rnd}}(\bm{T}_{\star})
.
\end{aligned}
\label{eq:weighted_clipping_contribution}
\end{equation}
The rounding contribution cannot exceed the unclipped reference, so \cref{eq:clipped_loss_decomposition} yields
\begin{equation}
\mathbb{E}[\ell(\bm{T}_{\star})]
\le\left(1+3C_{\mathrm{tail}}C_{\mathrm{align}}\alpha_b^{-2}\right)
\ell_{\mathrm{rnd}}(\bm{T}_{\star})
.
\label{eq:expected_loss_clipped}
\end{equation}

\textbf{Lower bound for a competing transform.}
Now take any sensitivity-balanced invertible $\bm{T}$.
For each nonzero group, $\|\bm{T}\bm{x}_j\|^2=d r_j^2$, so at most $d\alpha_b^{-2}$ coordinates can exceed $\alpha_b r_j$ in magnitude.
At least a fraction $1-\alpha_b^{-2}$ therefore retain their rounding noise.
All coordinates have the same sensitivity by \cref{eq:sensitivity_balance}, so they retain at least that fraction of the reference loss.
The clipping contribution is nonnegative, giving
\begin{equation}
\mathbb{E}[\ell(\bm{T})]
\ge\left(1-\alpha_b^{-2}\right)\ell_{\mathrm{rnd}}(\bm{T})
.
\label{eq:balanced_loss_lower_bound}
\end{equation}
Zero groups contribute no loss to either side.
Finally, \cref{thm:optimal} gives $\ell_{\mathrm{rnd}}(\bm{T}_{\star})\le\ell_{\mathrm{rnd}}(\bm{T})$.
Combining the upper and lower bounds proves, for $\alpha_b>1$,
\begin{equation}
\mathbb{E}[\ell(\bm{T}_{\star})]
\le\left(1+(1+3C_{\mathrm{tail}}C_{\mathrm{align}})(\alpha_b^2-1)^{-1}\right)
\mathbb{E}[\ell(\bm{T})]
.
\label{eq:near_optimal_explicit}
\end{equation}
For fixed assumption bounds, this factor is independent of the group width.
Since $\alpha_b\to\infty$, it is $1+O(\alpha_b^{-2})$, proving \cref{thm:near_optimal}.
The comparison remains restricted to sensitivity-balanced transforms because balancing an arbitrary transform can change its clipped loss.

\section{Exact Hessians for Grouped-Query Attention}
\label{sec:exact_gqa_hessians}

This appendix derives attention-aware Hessians for perturbations of the queries, keys, and values in grouped-query attention.
Unlike the per-head Hessians in \cref{eq:wushkv}, these exact expressions depend on the position of the perturbed vector.
The final subsection aggregates the key and value Hessians to construct WUSH-A.
The query case is included for completeness.

\subsection{Setup}

Fix a key/value head $h$, and let $g = 1, \dots, n_{\mathrm{q}} / n_{\mathrm{kv}}$ index the query heads that read it.
We allow the query/key and value head dimensions to differ in this appendix, writing them as $d_{\mathrm{k}}$ and $d_{\mathrm{v}}$ (the main text takes both to be $d$).
Write
\begin{equation}
\begin{gathered}
\bm{Q}_{h,g} = \left[ \bm{q}_{h,g,1}, \dots, \bm{q}_{h,g,S} \right] \in \mathbb{R}^{d_{\mathrm{k}} \times S}
, \\
\bm{K}_{h} = \left[ \bm{k}_{h,1}, \dots, \bm{k}_{h,S} \right] \in \mathbb{R}^{d_{\mathrm{k}} \times S}
, \qquad
\bm{V}_{h} = \left[ \bm{v}_{h,1}, \dots, \bm{v}_{h,S} \right] \in \mathbb{R}^{d_{\mathrm{v}} \times S}
.
\end{gathered}
\label{eq:gqa_vectors}
\end{equation}
The query and key vectors are taken after $\operatorname{Norm}$ and $\operatorname{RoPE}$, as in \cref{eq:attention}.
Let $\bm{p}_{h,g,t}$ be column $t$ of $\bm{P}_{h,g}$, with entry $p_{h,g,t,s}$ at cache position $s$, and let $\bm{o}_{h,g,t} = \bm{V}_{h} \bm{p}_{h,g,t}$ be the corresponding head output.
The causal mask gives $p_{h,g,t,s} = 0$ whenever $s > t$.
The output-projection blocks $\bm{W}_{\mathrm{O} \left( h, g \right)} \in \mathbb{R}^{d_{\mathrm{v}} \times d_{\mathrm{model}}}$ induce the head-coupling matrices
\begin{equation}
\bm{\Gamma}_{h,g,g'} = \bm{W}_{\mathrm{O} \left( h, g \right)} \bm{W}_{\mathrm{O} \left( h, g' \right)}^{\top} \in \mathbb{R}^{d_{\mathrm{v}} \times d_{\mathrm{v}}}
.
\label{eq:gqa_coupling}
\end{equation}
The diagonal block $\bm{\Gamma}_{h,g,g}$ measures the sensitivity of query head $\left( h, g \right)$ after output projection, while $\bm{\Gamma}_{h,g,g'}$ for $g \ne g'$ couples two query heads that share $h$.
Moreover, $\bm{\Gamma}_{h,g,g'}^{\top} = \bm{\Gamma}_{h,g',g}$.

\subsection{Hessian from Attention-Output Derivatives}

To measure the local sensitivity of the attention output to quantization error, perturb one query, key, or value vector by $\bm{\delta}$.
Write $\bm{Y} = \left[ \bm{y}_{1}, \dots, \bm{y}_{S} \right]$ and $\bm{Y}' \left( \bm{\delta} \right) = \left[ \bm{y}'_{1} \left( \bm{\delta} \right), \dots, \bm{y}'_{S} \left( \bm{\delta} \right) \right]$ for the unperturbed and perturbed attention outputs, respectively, and define
\begin{equation}
\ell \left( \bm{\delta} \right) = \left\| \bm{Y}' \left( \bm{\delta} \right) - \bm{Y} \right\|_{\mathrm{F}}^{2}
.
\label{eq:gqa_loss}
\end{equation}
Let $y'_{t,j} \left( \bm{\delta} \right)$ and $y_{t,j}$ denote coordinate $j$ of $\bm{y}'_{t} \left( \bm{\delta} \right)$ and $\bm{y}_{t}$, respectively.
Differentiating \cref{eq:gqa_loss} once gives
\begin{equation}
\frac{\partial \ell}{\partial \bm{\delta}}
= 2 \sum_{t=1}^{S} \left( \frac{\partial \bm{y}'_{t}}{\partial \bm{\delta}^{\top}} \right)^{\top} \left( \bm{y}'_{t} - \bm{y}_{t} \right)
.
\label{eq:gqa_loss_gradient}
\end{equation}
Differentiating again gives the full Hessian
\begin{equation}
\frac{\partial^{2} \ell}{\partial \bm{\delta} \partial \bm{\delta}^{\top}}
= 2 \sum_{t=1}^{S} \left( \left( \frac{\partial \bm{y}'_{t}}{\partial \bm{\delta}^{\top}} \right)^{\top} \left( \frac{\partial \bm{y}'_{t}}{\partial \bm{\delta}^{\top}} \right) + \sum_{j=1}^{d_{\mathrm{model}}} \left( y'_{t,j} - y_{t,j} \right) \frac{\partial^{2} y'_{t,j}}{\partial \bm{\delta} \partial \bm{\delta}^{\top}} \right)
.
\label{eq:gqa_loss_hessian}
\end{equation}
At zero perturbation,
\begin{equation}
\bm{Y}' \left( \bm{0} \right) - \bm{Y} = \bm{0}
.
\label{eq:gqa_zero_residual}
\end{equation}
This single fact forces the gradient in \cref{eq:gqa_loss_gradient} to vanish and, independently, removes the second term in \cref{eq:gqa_loss_hessian}.
Therefore,
\begin{equation}
\begin{gathered}
\left. \frac{\partial \ell}{\partial \bm{\delta}} \right|_{\bm{\delta} = \bm{0}} = \bm{0}
, \\
\left. \frac{\partial^{2} \ell}{\partial \bm{\delta} \partial \bm{\delta}^{\top}} \right|_{\bm{\delta} = \bm{0}} = 2 \sum_{t=1}^{S} \left. \left( \frac{\partial \bm{y}'_{t}}{\partial \bm{\delta}^{\top}} \right)^{\top} \left( \frac{\partial \bm{y}'_{t}}{\partial \bm{\delta}^{\top}} \right) \right|_{\bm{\delta} = \bm{0}}
.
\end{gathered}
\label{eq:gqa_hessian_jacobian}
\end{equation}

\subsection{Query Hessian}

Perturb the query $\bm{q}_{h,g,t}$ by $\bm{\delta} \in \mathbb{R}^{d_{\mathrm{k}}}$,
\begin{equation}
\bm{q}_{h,g,t} \mapsto \bm{q}_{h,g,t} + \bm{\delta}
.
\label{eq:gqa_query_perturbation}
\end{equation}
Only the query head $\left( h, g \right)$ at position $t$ changes.
The derivative of the logit vector with respect to $\bm{\delta}^{\top}$ is $\tau^{-1} \bm{K}_{h}^{\top}$, while the softmax derivative is $\operatorname{diag} \left( \bm{p}_{h,g,t} \right) - \bm{p}_{h,g,t} \bm{p}_{h,g,t}^{\top}$.
Chaining them gives the head-output and attention-output derivatives
\begin{equation}
\begin{gathered}
\left. \frac{\partial \bm{o}'_{h,g,t}}{\partial \bm{\delta}^{\top}} \right|_{\bm{\delta} = \bm{0}} = \tau^{-1} \bm{V}_{h} \left( \operatorname{diag} \left( \bm{p}_{h,g,t} \right) - \bm{p}_{h,g,t} \bm{p}_{h,g,t}^{\top} \right) \bm{K}_{h}^{\top}
, \\
\left. \frac{\partial \bm{y}'_{t}}{\partial \bm{\delta}^{\top}} \right|_{\bm{\delta} = \bm{0}} = \tau^{-1} \bm{W}_{\mathrm{O} \left( h, g \right)}^{\top} \bm{V}_{h} \left( \operatorname{diag} \left( \bm{p}_{h,g,t} \right) - \bm{p}_{h,g,t} \bm{p}_{h,g,t}^{\top} \right) \bm{K}_{h}^{\top}
.
\end{gathered}
\label{eq:gqa_query_jacobian}
\end{equation}
Substituting the attention-output derivative into \cref{eq:gqa_hessian_jacobian} gives the $d_{\mathrm{k}} \times d_{\mathrm{k}}$ query Hessian
\begin{equation}
\begin{array}{@{}l@{}}
\displaystyle \bm{H}_{\mathrm{Q} \left( h, g, t \right)}
= 2 \left. \left( \frac{\partial \bm{y}'_{t}}{\partial \bm{\delta}^{\top}} \right)^{\top} \left( \frac{\partial \bm{y}'_{t}}{\partial \bm{\delta}^{\top}} \right) \right|_{\bm{\delta} = \bm{0}}
\\
\displaystyle {}= 2 \tau^{-2} \bm{K}_{h} \left( \operatorname{diag} \left( \bm{p}_{h,g,t} \right) - \bm{p}_{h,g,t} \bm{p}_{h,g,t}^{\top} \right) \bm{V}_{h}^{\top} \bm{\Gamma}_{h,g,g} \bm{V}_{h} \left( \operatorname{diag} \left( \bm{p}_{h,g,t} \right) - \bm{p}_{h,g,t} \bm{p}_{h,g,t}^{\top} \right) \bm{K}_{h}^{\top}
.
\end{array}
\label{eq:gqa_query_hessian}
\end{equation}

\subsection{Key Hessian}

Perturb the shared key $\bm{k}_{h,s}$ by $\bm{\delta} \in \mathbb{R}^{d_{\mathrm{k}}}$,
\begin{equation}
\bm{k}_{h,s} \mapsto \bm{k}_{h,s} + \bm{\delta}
.
\label{eq:gqa_key_vector_perturbation}
\end{equation}
This changes every query head that reads $h$ and every causal query position $t \ge s$.
Differentiating the softmax with respect to the logit at cache position $s$ and summing against $\bm{V}_{h}$ gives the residual $\bm{v}_{h,s} - \bm{o}_{h,g,t}$.
For one such head and position, the head-output derivative is
\begin{equation}
\left. \frac{\partial \bm{o}'_{h,g,t}}{\partial \bm{\delta}^{\top}} \right|_{\bm{\delta} = \bm{0}} = \tau^{-1} p_{h,g,t,s} \left( \bm{v}_{h,s} - \bm{o}_{h,g,t} \right) \bm{q}_{h,g,t}^{\top}
.
\label{eq:gqa_key_perturbation}
\end{equation}
The corresponding attention-output derivative is
\begin{equation}
\left. \frac{\partial \bm{y}'_{t}}{\partial \bm{\delta}^{\top}} \right|_{\bm{\delta} = \bm{0}} = \tau^{-1} \sum_{g} p_{h,g,t,s} \bm{W}_{\mathrm{O} \left( h, g \right)}^{\top} \left( \bm{v}_{h,s} - \bm{o}_{h,g,t} \right) \bm{q}_{h,g,t}^{\top}
.
\label{eq:gqa_key_jacobian}
\end{equation}
Substituting the attention-output derivative into \cref{eq:gqa_hessian_jacobian} and expanding the head interactions gives the $d_{\mathrm{k}} \times d_{\mathrm{k}}$ key Hessian
\begin{equation}
\begin{array}{@{}l@{}}
\displaystyle \bm{H}_{\mathrm{K} \left( h, s \right)}
= 2 \sum_{t=1}^{S} \left. \left( \frac{\partial \bm{y}'_{t}}{\partial \bm{\delta}^{\top}} \right)^{\top} \left( \frac{\partial \bm{y}'_{t}}{\partial \bm{\delta}^{\top}} \right) \right|_{\bm{\delta} = \bm{0}}
\\
\displaystyle {}= 2 \tau^{-2} \sum_{t=1}^{S} \sum_{g,g'} p_{h,g,t,s} p_{h,g',t,s} \left( \bm{v}_{h,s} - \bm{o}_{h,g,t} \right)^{\top} \bm{\Gamma}_{h,g,g'} \left( \bm{v}_{h,s} - \bm{o}_{h,g',t} \right) \bm{q}_{h,g,t} \bm{q}_{h,g',t}^{\top}
.
\end{array}
\label{eq:gqa_key_hessian}
\end{equation}
The residual bilinear form is scalar, so each summand is a scaled query outer product.

\subsection{Value Hessian}

Perturb the shared value $\bm{v}_{h,s}$ by $\bm{\delta} \in \mathbb{R}^{d_{\mathrm{v}}}$,
\begin{equation}
\bm{v}_{h,s} \mapsto \bm{v}_{h,s} + \bm{\delta}
.
\label{eq:gqa_value_vector_perturbation}
\end{equation}
Values do not enter the softmax, so the attention probabilities remain fixed.
For query head $\left( h, g \right)$ at position $t \ge s$, the head-output derivative is
\begin{equation}
\left. \frac{\partial \bm{o}'_{h,g,t}}{\partial \bm{\delta}^{\top}} \right|_{\bm{\delta} = \bm{0}} = p_{h,g,t,s} \mathbf{I}_{d_{\mathrm{v}}}
.
\label{eq:gqa_value_perturbation}
\end{equation}
The attention-output derivative is, therefore,
\begin{equation}
\left. \frac{\partial \bm{y}'_{t}}{\partial \bm{\delta}^{\top}} \right|_{\bm{\delta} = \bm{0}} = \sum_{g} p_{h,g,t,s} \bm{W}_{\mathrm{O} \left( h, g \right)}^{\top}
.
\label{eq:gqa_value_jacobian}
\end{equation}
Substituting into \cref{eq:gqa_hessian_jacobian} gives the $d_{\mathrm{v}} \times d_{\mathrm{v}}$ value Hessian
\begin{equation}
\begin{array}{@{}l@{}}
\displaystyle \bm{H}_{\mathrm{V} \left( h, s \right)}
= 2 \sum_{t=1}^{S} \left. \left( \frac{\partial \bm{y}'_{t}}{\partial \bm{\delta}^{\top}} \right)^{\top} \left( \frac{\partial \bm{y}'_{t}}{\partial \bm{\delta}^{\top}} \right) \right|_{\bm{\delta} = \bm{0}}
\\
\displaystyle {}= 2 \sum_{t=1}^{S} \sum_{g,g'} p_{h,g,t,s} p_{h,g',t,s} \bm{\Gamma}_{h,g,g'}
.
\end{array}
\label{eq:gqa_value_hessian}
\end{equation}

\subsection{Aggregation for WUSH-A}

For the calibration sequence $i$, let $\bm{H}_{\mathrm{K} \left( h, s \right)}^{\left( i \right)}$ and $\bm{H}_{\mathrm{V} \left( h, s \right)}^{\left( i \right)}$ denote the per-position Hessians above.
Using the Gram matrices from \cref{alg:calibration}, the resulting transforms are
\begin{equation}
\begin{gathered}
\bm{T}_{\mathrm{K} \left( h \right)} = \textsc{Wush} \left( \sum_{i = 1}^{n_{\mathrm{cal}}} \bm{M}_{\mathrm{K} \left( h \right)}^{\left( i \right)}, \sum_{i = 1}^{n_{\mathrm{cal}}} \sum_{s = 1}^{S} \bm{H}_{\mathrm{K} \left( h, s \right)}^{\left( i \right)} \right)
, \\
\bm{T}_{\mathrm{V} \left( h \right)} = \textsc{Wush} \left( \sum_{i = 1}^{n_{\mathrm{cal}}} \bm{M}_{\mathrm{V} \left( h \right)}^{\left( i \right)}, \sum_{i = 1}^{n_{\mathrm{cal}}} \sum_{s = 1}^{S} \bm{H}_{\mathrm{V} \left( h, s \right)}^{\left( i \right)} \right)
.
\end{gathered}
\label{eq:wush_a_transforms}
\end{equation}
Thus, WUSH-A differs from WUSH only in the Hessian supplied to the transform construction.

\section{Additional Experimental Results}

\subsection{Additional Attention-Stage Errors}
\label{sec:additional_attention_stage_errors}

The 2-bit diagnostic plots from \cref{sec:experiments_attention_error} appear in \cref{fig:attention_stage_diagnostics}, and the 3-bit and 4-bit module-output errors appear in \cref{fig:attention_stage_combined}.
For the diagnostics, query-key dot products are measured before scaling and masking, while softmax uses the model's native scaling and causal mask.
The single-sided module-output plots quantize only keys or only values.

\begin{figure}[!htb]
\begin{center}
\includegraphics[width=0.49\textwidth]{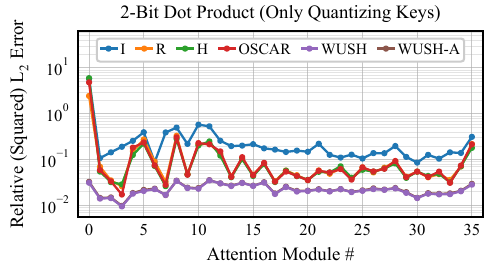}
\hfill
\includegraphics[width=0.49\textwidth]{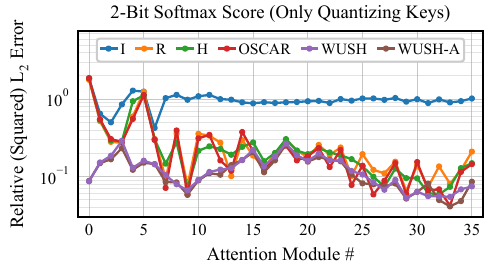}
\\[0.5em]
\includegraphics[width=0.49\textwidth]{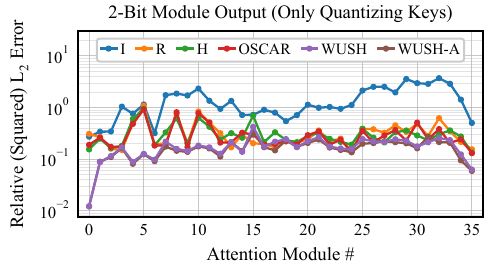}
\hfill
\includegraphics[width=0.49\textwidth]{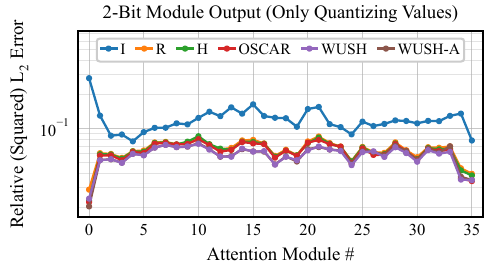}
\end{center}
\caption{Relative (squared) $\mathrm{L}_{2}$ errors at 2-bit for I, R, H, OSCAR, WUSH, and WUSH-A. The top row shows query-key dot-product and softmax errors with only keys quantized. The bottom row shows module-output errors with only keys or only values quantized.}
\label{fig:attention_stage_diagnostics}
\end{figure}

\begin{figure}[!htb]
\begin{center}
\includegraphics[width=0.49\textwidth]{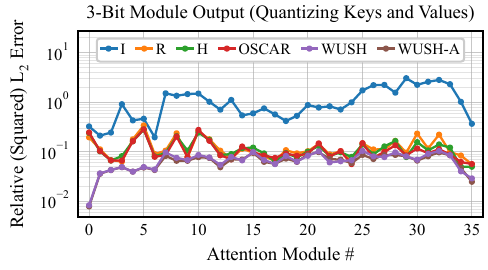}
\hfill
\includegraphics[width=0.49\textwidth]{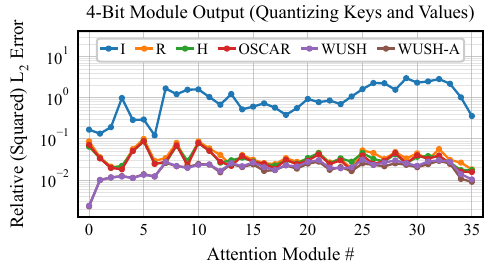}
\end{center}
\caption{Relative (squared) $\mathrm{L}_{2}$ module-output errors at 3-bit and 4-bit for I, R, H, OSCAR, WUSH, and WUSH-A.}
\label{fig:attention_stage_combined}
\end{figure}

\FloatBarrier

\subsection{Quantizer and Calibration Choices}
\label{sec:quantizer_calibration_choices}

\cref{tab:quantizer_calibration_choices} compares the main Qwen3-8B configurations with two alternatives under the same 2-bit benchmark protocol.
The main rows repeat the scores in \cref{tab:downstream_benchmarks}, and the alternative rows come from separate control evaluations.

\begin{table}[!htb]
\caption{Qwen3-8B downstream scores (\%) over three seeds for the main configurations and quantizer or calibration alternatives.
The gray second line gives the mean generated-token count in thousands (k) and the budget-exhaustion rate.
Main configurations are marked in bold.}
\label{tab:quantizer_calibration_choices}
\newcommand{\choicescore}[2]{#1\,$\pm$\,#2}
\newcommand{\choicestats}[2]{\textcolor{black!65}{#1\enspace#2\%}}
\begin{center}
\begin{tabular}{@{}l|cccc@{}}
\toprule
Calibration, Quantizer & AIME 2025 & MATH-500 & GPQA Diamond & LiveCodeBench v6 \\
\midrule
\emph{WUSH-KV}\\[0.35em]
\multirow{2}{*}{\textbf{FineWeb-Edu, OSCAR}} & \textbf{\choicescore{46.7}{3.3}} & \textbf{\choicescore{92.9}{0.1}} & \textbf{\choicescore{54.5}{2.3}} & \textbf{\choicescore{35.2}{0.7}} \\
& \choicestats{53.4k}{12} & \choicestats{12.2k}{2} & \choicestats{16.3k}{1} & \choicestats{53.9k}{26} \\[0.35em]
\multirow{2}{*}{FineWeb-Edu, QuEST} & \choicescore{28.9}{6.9} & \choicescore{88.7}{1.0} & \choicescore{50.5}{2.5} & \choicescore{33.1}{0.6} \\
& \choicestats{63.4k}{42} & \choicestats{11.7k}{4} & \choicestats{22.1k}{4} & \choicestats{63.7k}{46} \\
\midrule
\emph{OSCAR}\\[0.35em]
\multirow{2}{*}{\textbf{GPQA Diamond, OSCAR}} & \textbf{\choicescore{34.4}{8.4}} & \textbf{\choicescore{89.3}{1.1}} & \textbf{\choicescore{52.9}{3.0}} & \textbf{\choicescore{20.4}{0.3}} \\
& \choicestats{63.0k}{42} & \choicestats{16.0k}{6} & \choicestats{18.3k}{1} & \choicestats{73.8k}{71} \\[0.35em]
\multirow{2}{*}{FineWeb-Edu, OSCAR} & \choicescore{22.2}{1.9} & \choicescore{76.9}{1.3} & \choicescore{48.7}{0.3} & \choicescore{13.1}{1.2} \\
& \choicestats{70.8k}{60} & \choicestats{21.2k}{18} & \choicestats{19.2k}{4} & \choicestats{79.1k}{80} \\
\bottomrule
\end{tabular}
\end{center}
\end{table}

\textbf{Quantizer choice.}
With the WUSH-KV transform and FineWeb-Edu calibration setup held fixed, the OSCAR-style quantizer yields higher end-to-end scores than QuEST on all four tasks.
In a separate single-run AIME 2025 diagnostic, OSCAR calibrated on GPQA Diamond with QuEST quantizer scored 0.0\%.
In our local layerwise and attention-module-wise measurements, QuEST instead has lower $\mathrm{L}_{2}$ quantization error for both transforms.
Thus local $\mathrm{L}_{2}$ reconstruction error need not predict downstream autoregressive quality, motivating the OSCAR-style quantizer in the main evaluation.

\textbf{Calibration choice.}
WUSH-KV uses 128 FineWeb-Edu sequences of length 32,768, or about 4.2 million calibration tokens.
By comparison, OSCAR's Qwen3-8B calibration targets about 8k GPQA Diamond tokens.
Two preliminary AIME 2025 runs with GPQA-calibrated WUSH-KV and OSCAR-style quantizer scored 16.7\% and 10.0\%, suggesting that this setup was suboptimal for WUSH-KV in our tests.
We also recalibrated OSCAR on a FineWeb-Edu set matched in size to WUSH-KV's calibration set to test whether more calibration data would improve its results.
This alternative scores lower than the released OSCAR transforms calibrated on the GPQA Diamond on all four tasks, so we retain the released transforms in the main comparison.

\subsection{Long-Context Evaluations}
\label{sec:long_context_evaluations}

We evaluate Qwen3-8B and Qwen3-32B on long-context tasks with 2-bit KV caches and BF16 baselines using the SGLang setup of \cref{sec:experiments_benchmarks}.
For Qwen3-8B, \cref{tab:ruler_niah_length} reports RULER NIAH results from 4k to 128k, and \cref{tab:mrcr_summary,tab:mrcr_by_length} reports OpenAI MRCR results across 0k-128k.
We report the normalized area under the MRCR score vs context-length curve (AUC), following Context Arena.
The separate Qwen3-32B MRCR results appear in \cref{tab:mrcr_32b}.
Scores in this subsection are averaged over multiple seeds, and $\pm$ denotes standard deviation when shown.

In the RULER NIAH results, WUSH-KV remains close to BF16 at 4k and 8k but degrades as the context length increases.
WUSH-KV scores above OSCAR at every tested length, with the largest gap at 128k.

MRCR is difficult for all quantized methods, with substantial losses relative to BF16.
On Qwen3-8B, WUSH-KV generally degrades more gradually across context lengths and retains higher scores at long contexts than the OSCAR baseline.
On Qwen3-32B, OSCAR has slightly higher aggregate scores, while WUSH-KV scores higher at 64k-128k for all three needle counts.

\begin{table}[!htb]
\caption{Qwen3-8B RULER NIAH accuracy (\%) over eight subtasks by context length.}
\label{tab:ruler_niah_length}
\begin{center}
\begin{tabular}{@{}l|ccccccc@{}}
\toprule
Method & 4k & 8k & 16k & 32k & 64k & 128k & Overall \\
\midrule
BF16 & 99.71 & 97.79 & 92.85 & 94.29 & 85.09 & 79.17 & 91.48 \\
WUSH-KV & \textbf{97.44} & \textbf{91.92} & \textbf{80.75} & \textbf{76.87} & \textbf{68.36} & \textbf{57.26} & \textbf{78.77} \\
OSCAR & \textbf{96.63} & \textbf{89.79} & \textbf{80.00} & 74.00 & 64.36 & 25.69 & 71.74 \\
\bottomrule
\end{tabular}
\end{center}
\end{table}

\newcommand{\longcontextscore}[2]{\shortstack{#1\,$\pm$\,#2}}

\begin{table}[!htb]
\caption{Qwen3-8B MRCR summary accuracy (\%).}
\label{tab:mrcr_summary}
\begin{center}
\begin{tabular}{@{}l|ccccc@{}}
\toprule
Method & 2-Needle & 4-Needle & 8-Needle & Overall & AUC \\
\midrule
BF16 & \longcontextscore{36.89}{1.96} & \longcontextscore{22.66}{1.82} & \longcontextscore{16.38}{0.44} & 25.31 & 21.95 \\
WUSH-KV & \textbf{\longcontextscore{10.55}{0.24}} & \textbf{\longcontextscore{9.66}{0.37}} & \textbf{\longcontextscore{8.45}{0.24}} & \textbf{9.55} & \textbf{8.35} \\
OSCAR & \longcontextscore{7.27}{0.61} & \longcontextscore{6.98}{0.30} & \longcontextscore{6.11}{0.05} & 6.79 & 4.72 \\
\bottomrule
\end{tabular}
\end{center}
\end{table}

\begin{table}[!htb]
\caption{Qwen3-8B MRCR accuracy (\%) by needle count and context-length bucket.}
\label{tab:mrcr_by_length}
\begin{center}
\begin{tabular}{@{}l|ccccc@{}}
\toprule
Method & 0k-8k & 8k-16k & 16k-32k & 32k-64k & 64k-128k \\
\midrule
\emph{2-Needle} \\
BF16 & \longcontextscore{54.72}{2.40} & \longcontextscore{41.14}{1.01} & \longcontextscore{37.69}{3.47} & \longcontextscore{27.48}{2.17} & \longcontextscore{23.16}{3.31} \\
WUSH-KV & \textbf{\longcontextscore{18.05}{0.26}} & \textbf{\longcontextscore{10.82}{0.92}} & \textbf{\longcontextscore{10.14}{0.18}} & \textbf{\longcontextscore{8.14}{0.79}} & \textbf{\longcontextscore{5.49}{0.14}} \\
OSCAR & \longcontextscore{15.36}{1.30} & \longcontextscore{8.88}{0.56} & \longcontextscore{7.05}{0.40} & \longcontextscore{4.69}{0.69} & \longcontextscore{0.29}{0.11} \\
\midrule
\emph{4-Needle} \\
BF16 & \longcontextscore{29.64}{1.71} & \longcontextscore{22.73}{1.16} & \longcontextscore{23.59}{1.26} & \longcontextscore{19.48}{3.17} & \longcontextscore{17.14}{2.10} \\
WUSH-KV & \textbf{\longcontextscore{12.83}{0.36}} & \longcontextscore{9.15}{0.08} & \textbf{\longcontextscore{10.93}{1.21}} & \textbf{\longcontextscore{7.26}{0.25}} & \textbf{\longcontextscore{7.87}{0.41}} \\
OSCAR & \longcontextscore{11.63}{1.18} & \textbf{\longcontextscore{9.44}{0.65}} & \longcontextscore{8.66}{0.25} & \longcontextscore{3.61}{0.24} & \longcontextscore{1.06}{0.12} \\
\midrule
\emph{8-Needle} \\
BF16 & \longcontextscore{18.22}{1.39} & \longcontextscore{20.67}{1.39} & \longcontextscore{17.04}{1.38} & \longcontextscore{13.78}{1.31} & \longcontextscore{12.25}{0.56} \\
WUSH-KV & \longcontextscore{8.99}{0.19} & \textbf{\longcontextscore{9.73}{0.74}} & \textbf{\longcontextscore{8.36}{0.46} }& \textbf{\longcontextscore{8.19}{0.24}} & \textbf{\longcontextscore{6.98}{0.15}} \\
OSCAR & \textbf{\longcontextscore{9.34}{0.37}} & \longcontextscore{8.63}{0.38} & \longcontextscore{7.11}{0.78} & \longcontextscore{4.26}{0.79} & \longcontextscore{1.09}{0.27} \\
\bottomrule
\end{tabular}
\end{center}
\end{table}

\begin{table}[!htb]
\caption{Qwen3-32B MRCR accuracy (\%) by needle count, including context-length buckets, overall scores, and AUC.}
\label{tab:mrcr_32b}
\begin{center}
\begin{tabular}{@{}l|ccccccc@{}}
\toprule
Method & 0k-8k & 8k-16k & 16k-32k & 32k-64k & 64k-128k & Overall & AUC \\
\midrule
\emph{2-Needle} \\
BF16 & 70.98 & 60.14 & 50.88 & 38.27 & 36.87 & \longcontextscore{51.49}{2.03} & 43.70 \\
WUSH-KV & 16.39 & 10.79 & \textbf{8.93} & \textbf{10.48} & \textbf{6.18} & \longcontextscore{10.57}{0.69} & \textbf{9.25} \\
OSCAR & \textbf{24.83} & \textbf{14.39} & \textbf{9.01} & \textbf{10.87} & 3.92 & \textbf{\longcontextscore{12.64}{0.79}} & \textbf{9.46 }\\
\midrule
\emph{4-Needle} \\
BF16 & 47.80 & 27.65 & 23.84 & 28.62 & 24.22 & \longcontextscore{30.75}{1.76} & 27.03 \\
WUSH-KV & \textbf{12.25} & 4.79 & 8.26 & \textbf{8.41} & \textbf{8.44} & \longcontextscore{8.50}{0.54} & \textbf{8.15} \\
OSCAR & \textbf{14.84} & \textbf{12.59} & \textbf{9.41} & 7.48 & 5.51 & \textbf{\longcontextscore{10.06}{0.60}} & \textbf{8.10} \\
\midrule
\emph{8-Needle} \\
BF16 & 29.41 & 21.58 & 17.50 & 25.01 & 15.32 & \longcontextscore{21.86}{1.45} & 20.72 \\
WUSH-KV & 7.79 & 7.39 & 7.25 & \textbf{8.43} & \textbf{6.95} & \longcontextscore{7.57}{0.45} & \textbf{7.68 }\\
OSCAR & \textbf{9.87} & \textbf{9.67}& \textbf{8.98} & \textbf{8.29} & 4.62 & \textbf{\longcontextscore{8.29}{0.41}} & \textbf{7.64} \\
\bottomrule
\end{tabular}
\end{center}
\end{table}

\end{document}